\newtheorem{theorem}{Theorem}
\newtheorem{conjecture}{Conjecture}
\newtheorem{lemma}{Lemma}
\newenvironment{proof}{\medskip\noindent{\bf Proof.}}{\hfill$\Box$\vspace*{1mm}\medskip}
\renewcommand{\phi}{\varphi}
\renewcommand{\P}{\mathbb{P}}
\newcommand{\E}{\mathbb{E}}
\newcommand{\R}{\mathbb{R}}
\newcommand{\cN}{\mathcal{N}}
\newcommand{\cF}{\mathcal{F}}
\newcommand{\cS}{\mathcal{S}}
\def\ds1{\mathds{1}}
\renewcommand{\epsilon}{\varepsilon}
\newlength{\minipagewidth}
\newcommand{\beq}{\begin{equation}}
\newcommand{\eeq}{\end{equation}}
\newcommand{\beqa}{\begin{eqnarray}}
\newcommand{\eeqa}{\end{eqnarray}}
\newcommand{\beqan}{\begin{eqnarray*}}
\newcommand{\eeqan}{\end{eqnarray*}}
\def\ba#1\ea{\begin{align*}#1\end{align*}} 
\def\banum#1\eanum{\begin{align}#1\end{align}} 
\newcommand{\mS}{\mathbb{S}}
\newcommand{\Lip}{\mathrm{Lip}}
\begin{document}
\title{A law of robustness for two-layers neural networks}
\author{S\'ebastien Bubeck \\
Microsoft Research
\and Yuanzhi Li \thanks{This work was partly done while Y. Li and D. Nagaraj were visiting Microsoft Research.} \\
CMU
\and Dheeraj Nagaraj\footnotemark[1]\\
MIT}

\maketitle

\begin{abstract}
We initiate the study of the inherent tradeoffs between the size of a neural network and its robustness, as measured by its Lipschitz constant. We make a precise conjecture that, for any Lipschitz activation function and for most datasets, any two-layers neural network with $k$ neurons that perfectly fit the data must have its Lipschitz constant larger (up to a constant) than $\sqrt{n/k}$ where $n$ is the number of datapoints. In particular, this conjecture implies that overparametrization is necessary for robustness, since it means that one needs roughly one neuron per datapoint to ensure a $O(1)$-Lipschitz network, while mere data fitting of $d$-dimensional data requires only one neuron per $d$ datapoints. We prove a weaker version of this conjecture when the Lipschitz constant is replaced by an upper bound on it based on the spectral norm of the weight matrix. We also prove the conjecture in the high-dimensional regime $n \approx d$ (which we also refer to as the undercomplete case, since only $k \leq d$ is relevant here). Finally we prove the conjecture for polynomial activation functions of degree $p$ when $n \approx d^p$. We complement these findings with experimental evidence supporting the conjecture.
\end{abstract}

\section{Introduction} \label{sec:intro}
We study two-layers neural networks with inputs in $\R^d$, $k$ neurons, and Lipschitz non-linearity $\psi : \R \rightarrow \R$. These are functions of the form:
\begin{equation} \label{eq:nnform}
x \mapsto \sum_{\ell=1}^k a_\ell \psi(w_\ell \cdot x + b_\ell) \,,
\end{equation}
with $a_\ell, b_\ell \in \R$ and $w_\ell \in \R^d$ for any $\ell \in [k]$. We denote by $\cF_k(\psi)$ the set of functions of the form \eqref{eq:nnform}. When $k$ is large enough and $\psi$ is non-polynomial, this set of functions can be used to fit any given data set \citep{cybenko1989approximation, leshno1993multilayer}. That is, given a data set $(x_i, y_i)_{i \in [n]} \in (\R^d \times \R)^n$, one can find $f \in \cF_k(\psi)$ such that
\begin{equation} \label{eq}
f(x_i) = y_i, \forall i \in [n] \,.
\end{equation}
In a variety of scenarios one is furthermore interested in fitting the data {\em smoothly}. For example, in machine learning, the data fitting model $f$ is used to make predictions at unseen points $x \not\in \{x_1, \hdots, x_n\}$. It is reasonable to ask for these predictions to be stable, that is a small perturbation of $x$ should result in a small perturbation of $f(x)$. 

A natural question is: how ``costly'' is this stability restriction compared to mere data fitting? In practice it seems much harder to find robust models for large scale problems, as first evidenced in the seminal paper \citep{G14}. In theory the ``cost'' of finding robust models has been investigated from a computational complexity perspective in \citep{bubeck2019adversarial}, from a statistical perspective in \citep{Schmidt18}, and more generally from a model complexity perspective in \citep{pmlr-v99-degwekar19a, raghunathanadversarial, allen2020feature}. We propose here a different angle of study within the broad model complexity perspective: does a model {\em have to} be larger for it to be robust? Empirical evidence (e.g., \citep{Goodfellow15, Madry18}) suggests that bigger models (also known as ``overparametrization'') do indeed help for robustness.

Our main contribution is a conjecture (Conjecture \ref{conj:LB} and Conjecture \ref{conj:UB}) on the precise tradeoffs between size of the model (i.e., the number of neurons $k$) and robustness (i.e., the Lipschitz constant of the data fitting model $f \in \cF_k(\psi))$ for generic data sets. 
We say that a data set $(x_i, y_i)_{i \in [n]}$ is {\em generic} if it is i.i.d. with $x_i$ uniform (or approximately so, see below) on the sphere $\mS^{d-1} = \{x \in \R^d : \|x\|=1\}$ and $y_i$ uniform on $\{-1,+1\}$. We give the precise conjecture in Section \ref{sec:conj}. 
We prove several weaker versions of Conjecture \ref{conj:LB} and Conjecture \ref{conj:UB} respectively in Section \ref{sec:LB} and Section \ref{sec:interpolation}. We also give empirical evidence for the conjecture in Section \ref{sec:exp}.

\paragraph{A corollary of our conjecture.} A key fact about generic data, established in \cite{baum1988capabilities, yun2019small, BELM20}, is that one can memorize arbitrary labels with $k \approx n/d$, that is merely one neuron per $d$ datapoints. Our conjecture implies that for such optimal-size neural networks it is {\em impossible} to be robust, in the sense that the Lipschitz constant must be of order $\sqrt{d}$. The conjecture also states that to be robust (i.e. attain Lipschitz constant $O(1)$) one must {\em necessarily} have $k \approx n$, that is roughly each datapoint must have its own neuron. Therefore, we obtain a trade off between size and robustness, namely to make the network robust it needs to be {\em $d$ times larger} than for mere data fitting. We illustrate these two cases in Figure \ref{fig:intro}. We train a neural network to fit generic data, and plot the maximum gradient over several randomly drawn points (a proxy for the Lipschitz constant) for various values of $\sqrt{d}$, when either $k=n$ (blue dots) or $k= \frac{10 n}{d}$ (red dots). As predicted, for the large neural network ($k=n$) the Lipschitz constant remains roughly constant, while for the optimally-sized one ($k= \frac{10 n}{d}$) the Lipschitz constant increases roughly linearly in $\sqrt{d}$.

\begin{figure}
        \centering
        \includegraphics[width=0.5\textwidth]{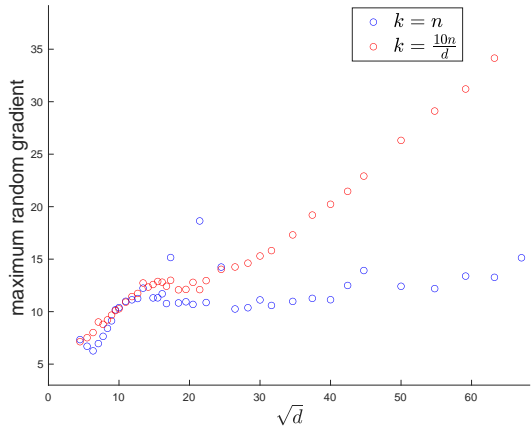} 
        \caption{See Section \ref{sec:exp} for the details of this experiment.}
        \label{fig:intro}
\end{figure}

\paragraph{Notation.} For $\Omega \subset \R^d$ we define $\Lip_{\Omega}(f) = \sup_{x\neq x' \in \Omega} \frac{|f(x)-f(x')|}{\|x-x'\|}$ (if $\Omega = \R^d$ we omit the subscript and write $\Lip(f)$), where $\|\cdot\|$ denotes the Euclidean norm. For matrices we use $\|\cdot\|_{\mathrm{op}}, \|\cdot\|_{\mathrm{op},*}$, $\|\cdot\|_{\mathrm{F}}$ and $\langle \cdot, \cdot \rangle$ for respectively the operator norm, the nuclear norm (sum of singular values), the Frobenius norm, and the Frobenius inner product. We also use these notations for tensors of higher order, see Appendix \ref{app1} for more details on tensors. We denote $c>0$ and $C>0$ for universal numerical constants, respectively small enough and large enough, whose values can change in different occurences. Similarly, by $c_p > 0$ and $C_p > 0$ we denote constants depending only on the parameter $p$. We also write $\mathrm{ReLU}(t) = \max(t,0)$ for the rectified linear unit.

\paragraph{Generic data.} We give some flexibility in our definition of ``generic data'' in order to focus on the essence of the problem, rather than technical details. Namely, in addition to the spherical model mentioned above, where $x_i$ is i.i.d. uniform on the sphere $\mS^{d-1} = \{x \in \R^d : \|x\|=1\}$, we also consider the very closely related model where $x_i$ is i.i.d. from a centered Gaussian with covariance $\frac{1}{d} \mathrm{I}_d$ (in particular $\E[ \|x_i\|^2 ] =1$, and in fact $\|x_i\|$ is tightly concentrated around $1$). In both cases we consider $y_i$ to be i.i.d. random signs. We say that a property holds with high probability for {\em generic data}, if it holds with high probability either for the spherical model or for the Gaussian model.

\section{A conjectured law of robustness} \label{sec:conj}
Our main contribution is the following conjecture, which asserts that, on generic data sets, increasing the size of a network is necessary to obtain robustness:

\begin{conjecture} \label{conj:LB}
For generic data sets, with high probability\footnote{We do not quantify the ``with high probability'' in our conjecture. We believe the conjecture to be true except for an event of exponentially small probability with respect to the sampling of a generic data set, but even proving that the statement is true with strictly positive probability would be extremely interesting.}, any $f \in \cF_k(\psi)$ fitting the data\footnote{We expect the same lower bound to hold even if one only asks $f$ to approximately fit the data. In fact our provable variants of Conjecture \ref{conj:LB} are based proofs that are robust to only assuming an approximately fitting $f$.} (i.e., satisfying \eqref{eq}) must also satisfy:
\[
\Lip_{\mS^{d-1}}(f) \geq c \sqrt{\frac{n}{k}} \,.
\]
\end{conjecture}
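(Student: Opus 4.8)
The plan is to use the probabilistic method. Fix the architecture (the value of $k$ and the activation $\psi$, which we may take to be $1$-Lipschitz after rescaling), set $L = c\sqrt{n/k}$, and show that with probability exponentially close to $1$ over a generic data set there is \emph{no} choice of parameters $\theta = (a_\ell,w_\ell,b_\ell)_{\ell\in[k]}$ for which $f_\theta\in\cF_k(\psi)$ is simultaneously $L$-Lipschitz on $\mS^{d-1}$ and interpolates the data. The interpolation constraint implies $\frac1n\sum_{i=1}^n y_i f_\theta(x_i) = \frac1n\sum_{i=1}^n y_i^2 = 1$, so it suffices to prove that, whp, every candidate $L$-Lipschitz network has correlation $\frac1n\sum_i y_i f_\theta(x_i) < \tfrac12$. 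The engine is a union bound over a net in the space of candidate networks, powered by a concentration estimate for each fixed Lipschitz function.

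For the concentration step, fix an $L$-Lipschitz $g:\mS^{d-1}\to\R$ and let $\mu_g = \E_x[g(x)]$. By Lévy's isoperimetric inequality on the sphere (and Gaussian concentration in the Gaussian model), $g(x)-\mu_g$ is $O(L/\sqrt d)$-subgaussian; since the $y_i$ are independent random signs, independent of the $x_i$,
\[
\frac1n\sum_{i=1}^n y_i g(x_i) \;=\; \frac1n\sum_{i=1}^n y_i\bigl(g(x_i)-\mu_g\bigr) \;+\; \mu_g\cdot\frac1n\sum_{i=1}^n y_i ,
\]
and the two terms on the right are averages of independent mean-zero variables with subgaussian parameters $O(L/\sqrt{nd})$ and $O(|\mu_g|/\sqrt n)$ respectively. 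Hence $\P\bigl(\frac1n\sum_i y_i g(x_i)\ge\tfrac12\bigr)\le \exp(-c\,nd/L^2)+\exp(-c\,n/\mu_g^2)$. Moreover, since data fitting requires $k\gtrsim n/d$ (generic data can be memorized only then), we have $L\lesssim\sqrt d$, so the spread of the $\pm1$ labels around $\mu_g$ forces $|\mu_g|\lesssim 1$ for any near-interpolating $g$, making the second term negligible and leaving the bound $\exp(-c\,nd/L^2)$.

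It remains to discretize. One covers the set of candidate networks by a net $\cN$ with $|\cN| = \exp\bigl(O(kd\log(nd))\bigr)$ such that every $g\in\cN$ is $O(L)$-Lipschitz and every candidate $f_\theta$ is sup-norm $\tfrac14$-close on $\mS^{d-1}$ to some $g\in\cN$ (so $\frac1n\sum_i y_i f_\theta(x_i)\ge \frac1n\sum_i y_i g(x_i) - \frac14$); using that $\theta\mapsto f_\theta$ has a polynomially bounded $\theta$-modulus on a parameter ball of radius $\mathrm{poly}(n,d)$, an inverse-polynomial net suffices. A union bound then gives a contradiction as soon as $kd\log(nd)\le c\,nd/L^2$, i.e. $L\le c\sqrt{n/(k\log(nd))}$; absorbing (or removing, via a sharper entropy bound) the logarithmic factor yields the stated $c\sqrt{n/k}$. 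Note that the factor $d$ produced by spherical isoperimetry in the concentration bound exactly cancels the factor $d$ in the parameter count $kd$ — this is precisely the source of the $\sqrt{n/k}$ rate and of the ``roughly one neuron per datapoint'' corollary.

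The main obstacle is the discretization step for the \emph{true} Lipschitz constant: the set $\{\theta : \Lip_{\mS^{d-1}}(f_\theta)\le L\}$ is not a parameter ball. For positively homogeneous activations such as $\mathrm{ReLU}$ one may rescale $a_\ell\leftrightarrow w_\ell$, so this set is unbounded; and, more seriously, cancellations between neurons can make $\Lip_{\mS^{d-1}}(f_\theta)$ far smaller than the naive estimate $\|a\|_2\max_\ell\|w_\ell\|$, so there is no a priori bound on the parameters and no obvious way to cover the set so that each net function stays $O(L)$-Lipschitz (which is what the concentration bound needs) while keeping $|\cN|=\exp(O(kd\log(nd)))$. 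This is exactly why the paper instead proves the variant in which $\Lip(f)$ is replaced by the spectral surrogate $\|a\|_2\|W\|_{\mathrm{op}}$: that constraint \emph{is} (after a mild normalization) a bounded parameter set on which every network is genuinely $L$-Lipschitz, so the net argument above goes through; the cases $n\approx d$ and polynomial $\psi$ with $n\approx d^p$ are then handled separately, with undercompleteness, respectively tensor identifiability, supplying the structure that replaces the missing parameter control. A full proof of the conjecture would require either a data-dependent reparametrization under which a small Lipschitz constant forces controlled ``effective'' weights, or a genuinely different, non-covering argument.
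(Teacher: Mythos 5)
The statement you are asked about is Conjecture~\ref{conj:LB}: the paper does not prove it, and explicitly offers it as an open problem (it only establishes the weaker variants in Theorems~\ref{thm:spectralnorm}, \ref{thm:undercomplete}, \ref{thm:tensorLB}--\ref{thm:quadratic}). Your proposal likewise does not prove it, and to your credit you say so: the covering/concentration scheme you outline breaks exactly at the step you flag, namely discretizing the set $\{\theta : \Lip_{\mS^{d-1}}(f_\theta)\le L\}$. That set is not a bounded parameter ball (homogeneity rescalings and, more fundamentally, cancellations between neurons mean small $\Lip(f_\theta)$ imposes no control on $\|a\|$ or $\|W\|$), so one cannot build a net of size $\exp(O(kd\log(nd)))$ whose elements are themselves $O(L)$-Lipschitz and which sup-norm approximates every candidate. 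Without that, the union bound has nothing to range over, and the argument does not close. The pieces that do work --- the reduction to bounding $\frac1n\sum_i y_i g(x_i)$ for $L$-Lipschitz $g$, the L\'evy/Gaussian concentration giving $\exp(-c\,nd/L^2)$ per fixed function, and the observation that $nd/L^2 \gtrsim kd$ yields exactly the conjectured rate $L\lesssim\sqrt{n/k}$ --- form a plausible heuristic for \emph{why} the conjecture should be true, but they are not a proof.

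One correction to your meta-commentary on the paper's provable variants: none of them is obtained by making your net argument go through on a restricted parameter set. Theorem~\ref{thm:spectralnorm} is an essentially deterministic Cauchy--Schwarz computation on top of the lower bound $\sum_\ell |a_\ell|\,\|w_\ell\|\ge \sqrt{n}/L$ from \cite{BELM20}; Theorem~\ref{thm:undercomplete} uses the mean value theorem along segments $[x_i,x_{n+i}]$ together with operator-norm concentration of $\sum_i x_i'x_i'^{\top}$ and the nuclear-norm bound $\|P^\top P\|_{\mathrm{op},*}=k$; and the polynomial cases use duality against the random tensor $\Omega=\sum_i y_i x_i^{\otimes p}$ via $n=\langle T,\Omega\rangle\le\|\Omega\|_{\mathrm{op}}\|T\|_{\mathrm{op},*}$. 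These are genuinely different mechanisms from covering-plus-concentration, and the gap you identify --- finding a data-independent structure that controls \emph{all} low-Lipschitz interpolants at once --- is precisely what remains open.
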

Note that for generic data, with high probability (for $n= \mathrm{poly}(d)$), there exists a smooth interpolation. Namely there exists $g : \R^d \rightarrow \R$ with $g(x_i) = y_i, \forall i \in [n]$ and $\Lip(g) = O(1)$. This follows easily from the fact that with high probability (for large $d$) one has $\|x_i - x_j\| \geq 1, \forall i \neq j$. Conjecture \ref{conj:LB} puts restrictions on how smoothly one can interpolate data with small neural networks.  A striking consequence of the conjecture is that for a two-layers neural network $f \in \cF_k(\psi)$ to be as robust as this function $g$ (i.e., $\Lip(f) = O(1)$) and fit the data, one must have $k = \Omega(n)$, i.e., roughly one neuron per data point. On the other hand with that many neurons it is quite trivial to smoothly interpolate the data, as we explain in Section \ref{sec:ReLUinterpolation}. Thus the conjecture makes a strong statement that essentially the trivial smooth interpolation is the best thing one can do. In addition to making the prediction that one neuron per datapoint is necessary for optimal smoothness, the conjecture also gives a precise prediction on the possible tradeoff between size of the network and its robustness. We also conjecture that this whole range of tradeoffs is actually achievable:

\begin{conjecture} \label{conj:UB}
Let $n, d, k$ be such that $C \cdot \frac{n}{d} \leq k \leq C \cdot n$ and $n \leq d^C$ where $C$ is an arbitrarily large constant in the latter occurence. There exists $\psi$ such that, for generic data sets, with high probability, there exists $f \in \cF_k(\psi)$ fitting the data (i.e., satisfying \eqref{eq}) and such that
\[
\Lip_{\mS^{d-1}}(f) \leq C \sqrt{\frac{n}{k}} \,.
\]
\end{conjecture}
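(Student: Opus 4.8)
The plan is to realize the required $f$ as a ``regularized interpolant'' assembled from $k$ neurons, and to control its Lipschitz constant through the formula for $\nabla f$ together with the near-orthogonality of generic points on $\mS^{d-1}$. It is natural to settle the two endpoints $k\asymp n$ and $k\asymp n/d$ of the range $[Cn/d,Cn]$ first, and then try to interpolate between them.

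For $k\asymp n$, I would use the explicit ``bump'' construction already alluded to in Section~\ref{sec:ReLUinterpolation}: take $\psi$ a fixed Lipschitz bump concentrated near $1$ (e.g.\ $\psi(t)=\mathrm{ReLU}(2t-1)$, or a clipped version), set $w_\ell=x_\ell$, $a_\ell=y_\ell$, and let $f(x)=\sum_{i=1}^n y_i\,\psi(\langle x,x_i\rangle)$. For generic data one has $|\langle x_i,x_j\rangle|\le 1/2$ for all $i\neq j$ with high probability (indeed $|\langle x_i,x_j\rangle|\lesssim\sqrt{\log n/d}$), so only the diagonal term survives and $f(x_j)=y_j$. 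For the Lipschitz bound, $\nabla f(x)=2\sum_{i\in A(x)}y_i x_i$ with $A(x)=\{i:\langle x,x_i\rangle>1/2\}$, and a Cauchy--Schwarz argument on $\|\sum_{i\in A(x)}x_i\|$ forces $|A(x)|\le 4$; the $x_i$ indexed by $A(x)$ are nearly orthonormal, so $\|\nabla f(x)\|=O(1)$ uniformly, which is $O(\sqrt{n/k})$ at $k\asymp n$.

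For $k\asymp n/d$ the target is $\Lip_{\mS^{d-1}}(f)=O(\sqrt d)$, and here I would pass through a kernel. Choose $\psi$ so that the associated kernel $K(x,x')=\kappa(\langle x,x'\rangle)$ has Gram matrix $\mathbf K=(\kappa(\langle x_i,x_j\rangle))_{i,j}$ well conditioned on generic data (true when $n$ is below the relevant threshold, e.g.\ $n\asymp d^p$ for $\kappa(t)=t^p$), so that the minimum-norm interpolant is $f^\star=\sum_i\alpha_i\kappa(\langle\cdot,x_i\rangle)$ with $\alpha=\mathbf K^{-1}y$ and $\|\alpha\|\asymp\|y\|=\sqrt n$. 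For a polynomial activation $\psi(t)=t^p$ the function $f^\star$ is a degree-$p$ polynomial whose top-degree part is (a multiple of) the symmetric tensor $\sum_i y_i x_i^{\otimes p}$; for $n\asymp d^p$ this is a sum of $\asymp d^{p-1}=\Theta(n/d)$ random rank-one terms and hence generic enough to have Waring rank $\Theta(n/d)$, so (handling lower-degree parts the same way) $f^\star\in\cF_k(\psi)$ for every $k\ge c\,n/d$. The Lipschitz bound then comes from $\nabla f^\star(x)=\sum_i\alpha_i\,\kappa'(\langle x,x_i\rangle)\,x_i$: one writes $\|\nabla f^\star(x)\|\le\|X X^\top\|_{\mathrm{op}}^{1/2}\,\|c\|$ for the coefficient vector $c_i=\alpha_i\kappa'(\langle x,x_i\rangle)$, uses $\|\alpha\|\asymp\sqrt n$, $\|\kappa'\|_\infty=O_p(1)$ on $[-1,1]$, and $\|XX^\top\|_{\mathrm{op}}\asymp n/d$ in the overcomplete range. (For $p=1$ this is the undercomplete case $n\asymp d$, where $k=1$ is linear interpolation with $\|w\|\asymp\sqrt n$.) For intermediate $k$ I would try to split the dataset, fitting a $\Theta(\cdot)$-fraction of the points with the bump construction (costing only $O(1)$ in Lipschitz constant) and the remainder with a rescaled low-neuron construction, or else partition into $k$ groups of size $m=n/k$ and build one ridge ``gadget'' $x\mapsto a_g\psi(\langle w_g,x\rangle+b_g)$ per group, choosing $w_g$ (via a small underdetermined linear system adapted to the group, so that same-label points are projected to a common interval) of norm $\|w_g\|\asymp\sqrt m$, with $\psi$ flat near $0$ so that non-group points fall in the flat zone.

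\textbf{The main obstacle.} The hard point, and the reason this is only a conjecture, is the gap between the target $\sqrt{n/k}$ and the bound $n/k$ that all of the ``one gadget per group of $m=n/k$ points'' strategies seem to produce. Even when each gadget has $\|w_g\|\asymp\sqrt m$ and $\psi$ flat off a $\Theta(1)$-length transition region, there exist points $x\in\mS^{d-1}$ — namely suitable combinations of the directions $w_g$ — at which as many as $\asymp m$ gadgets are simultaneously in their transition region, and since their gradients are roughly orthogonal these add in quadrature to $\asymp\sqrt{m}\cdot\sqrt{m}=m=n/k$. Beating $n/k$ therefore seems to require the data-fitting to be genuinely non-local (shared across all $k$ neurons, as for the kernel interpolant, so that at any point only $O(1)$ neurons are ``active'', or so that the active gradients cancel) while at the same time keeping the representing neuron count at $k$ and keeping the interpolation exact; reconciling these three constraints uniformly over $\mS^{d-1}$ (e.g.\ via an $\epsilon$-net and a union bound over the active sets) is precisely where I expect a new idea is needed. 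I would expect the program above to go through cleanly at the endpoint $k\asymp n$ for general $\psi$, and at the endpoint $k\asymp n/d$ in the polynomial regime $n\asymp d^p$ (in particular the undercomplete case $n\asymp d$), where the minimum-norm interpolant is tight against Conjecture~\ref{conj:LB}, but attaining the entire tradeoff curve for all $(n,d,k)$ remains open.
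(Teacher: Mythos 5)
This statement is a conjecture, and your proposal reproduces essentially the same partial progress that the paper itself makes: the cap/bump construction at $k\asymp n$ (Section~\ref{sec:simplest2}), a degree-$p$ polynomial/tensor interpolant with $k\asymp d^{p-1}\asymp n/d$ when $n\asymp d^p$ (Theorem~\ref{thm:tensorUB}, where the paper takes the simpler coefficients $\alpha_i=y_i$ and settles for approximate fit rather than inverting the Gram matrix, and gets $f\in\cF_k(\psi)$ from a deterministic rank-one decomposition of symmetric tensors rather than a Waring-rank genericity argument), and a one-neuron-per-group scheme in between that only achieves $n/k$ (Theorem~\ref{thm:weakconjUB}). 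You also correctly identify the very obstacle the paper leaves open --- closing the gap between $n/k$ and $\sqrt{n/k}$ for intermediate $k$ --- so there is nothing to flag beyond noting that neither your proposal nor the paper proves the full conjecture.
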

The condition $k \leq C \cdot n$ in Conjecture \ref{conj:UB} is necessary, for any interpolation of the data must have Lipschitz constant at least a constant. The other condition on $k$, namely $k \geq C \cdot \frac{n}{d}$, is also necessary, for that many neurons is needed to merely guarantee the existence of a data-fitting neural network with $k$ neurons (see \cite{baum1988capabilities, yun2019small, BELM20}). Finally the condition $n \leq d^C$ is merely used to avoid explicitly stating a logarithmic term in our conjecture (indeed, equivalently one can replace this condition by adding a multiplicative polylogarithmic term in $d$ in the claimed inequality).

\paragraph{Two extreme regimes.}
Two regimes of particular interest are the {\em optimal smoothness regime}, where we consider how many neurons we need to achieve $\Lip(f) = O(1)$, and the {\em optimal size regime}, where we consider how small a Lipschitz constant is achievable with $k = C \cdot n / d$ (i.e., the smallest number of neurons needed to merely fit the data). Our conjectures predict that to be in the optimal smoothness regime it is necessary and sufficient to have $k \approx n$, while for optimal size regime it is necessary and sufficient to have $\Lip(f) \approx \sqrt{d}$.

\paragraph{Our results around Conjecture \ref{conj:UB} (Section \ref{sec:interpolation}).} We prove Conjecture \ref{conj:UB} for both the optimal smoothness regime (which is quite straightforward, see Section \ref{sec:ReLUinterpolation}) and for the optimal size regime (here more work is needed, and we use a certain tensor-based construction, see Section \ref{sec:tensorinterpolation}). In the latter case we only prove approximate data fitting (mostly to simplify the proofs), and more importantly we need to assume that $n$ is of order $d^p$ for some even integer $p$. It would be interesting to generalize the proof to any $n$. While the conjecture remains open between these two extreme regimes, we do give a construction in Section \ref{sec:ReLUinterpolation} which has the correct qualitative behavior (namely increasing $k$ improves the Lipschitz constant), albeit the scaling we obtain is $n/k$ instead of $\sqrt{n/k}$, see Theorem \ref{thm:weakconjUB}.

\paragraph{Our results around Conjecture \ref{conj:LB} (Section \ref{sec:LB}).} We prove a weaker version of Conjecture \ref{conj:LB} where the Lipschitz constant on the sphere is replaced by a proxy involving the spectral norm of the weight matrix, see Theorem \ref{thm:spectralnorm}. We also prove the conjecture in the optimal size regime, specifically when $n=d^p$ for an integer $p$ and one uses a polynomial activation function of degree $p$, see Theorem \ref{thm:polynomialLB}. For $p=1$ (i.e., $n \approx d$) we in fact prove the conjecture for abritrary non-linearities, see Theorem \ref{thm:undercomplete}.

\paragraph{Further open problems.} Our proposed law of robustness is a first mathematical formalization of the broader phenomenon that ``overparametrization in neural networks is necessary for robustness''. Ideally one would like a much more refined understanding of the phenomenon than the one given in Conjecture \ref{conj:LB}. For example, one could imagine that in greater generality, the law would read $\Lip_{\Omega}(f) \geq F(k, (x_i,y_i)_{i \in [n]}, \Omega)$. That is, we would like to understand how the achievable level of smoothness depends on the particular data set at hand, but also on the set where we expect to be making predictions. Another direction to generalize the law would be to extend it to multi-layers neural networks. In particular one could imagine the most general law would replace the parameter $k$ (number of neurons) by the type of architecture being used and in turn predict the best architecture for a given data set and prediction set. Finally note that our proposed law apply to {\em all} neural networks, but it would also be interesting to understand how the law interacts with algorithmic considerations (for example in Section \ref{sec:exp} we use Adam~\cite{kingma2014adam} to find a set of weights that qualitatively match Conjecture \ref{conj:UB}).

\section{Smooth interpolation} \label{sec:interpolation}
We start with a warm-up in Section \ref{sec:simplest} where we discuss the simplest case of interpolation with a linear model ($k=1, n\leq d$) and in Section \ref{sec:simplest2} for the optimal smoothness regime ($k=n$). We generalize the construction of Section \ref{sec:simplest2} in Section \ref{sec:ReLUinterpolation} to obtain the whole range of tradeoffs between $k$ and $\Lip(f)$, albeit with a suboptimal scaling, see Theorem \ref{thm:weakconjUB}. We also generalize the linear model calculations of Section \ref{sec:simplest} in Section \ref{sec:tensorinterpolation} to obtain the optimal size regime for larger values of $n$ via a certain tensor construction.

\subsection{The simplest case: optimal size regime when $n \leq c \cdot d$} \label{sec:simplest}
Let us consider $k=1$, $n \leq c \cdot d$ and $\psi(t) = t$. Thus we are trying to find $w \in \R^d$ such that $w \cdot x_i = y_i$ for all $i \in [n]$, or in other words $X w = Y$ with $X$ the $n \times d$ matrix whose $i^{th}$ row is $x_i$, and $Y=(y_1,\hdots, y_n)$. The smoothest solution to this system (i.e., the one minimizing $\|w\|$) is 
\[
w = X^{\top} (X X^{\top})^{-1} Y \,,
\]
Note that 
\[
\Lip(x \mapsto w \cdot x) = \|w\| = \sqrt{w^{\top} w} = \sqrt{Y^{\top} (X X^{\top})^{-1} Y} \,.
\]
Using [Theorem 5.58, \cite{vershynin12}] one has with probability at least $1-\exp(C - c d)$ (and using that $n \leq c \cdot d$) that
\[
X X^{\top} \succeq \frac{1}{2} I_n \,,
\]
and thus $\Lip(x \mapsto w \cdot x) \leq \sqrt{2} \cdot \|Y\| = \sqrt{2 n}$. This concludes the proof sketch of Conjecture \ref{conj:UB} for the simplest case $k=1$ and $n \leq d$.

\subsection{Another simple case: optimal smoothness regime} \label{sec:simplest2}
\sloppy
Next we consider the optimal smoothness regime in Conjecture \ref{conj:UB}, namely $k=n$. First note that, for generic data and $n= \mathrm{poly}(d)$, with high probability the caps $C_i := \left\{x \in \mS^{d-1} : x_i \cdot x \geq 0.9 \right\}$ are disjoint sets and moreover they each contain a single data point (namely $x_i$). With a single ReLU unit it is then easy to make a smooth function ($10$-Lipschitz) which is $0$ outside of $C_i$ and equal to $+1$ at $x_i$ (in other words the neuron activates for a single data point), namely $x \mapsto 10 \cdot \mathrm{ReLU}\left(x_i \cdot x - 0.9 \right)$. Thus one can fit the entire data set with the following ReLU network which is $10$-Lipschitz on the sphere:
\[
f(x) = \sum_{i=1}^n 10 y_i \cdot \mathrm{ReLU}\left(x_i \cdot x - 0.9 \right) \,.
\]
This concludes the proof of Conjecture \ref{conj:UB} for the optimal smoothness regime $k=n$.

\subsection{Intermediate regimes via ReLU networks} \label{sec:ReLUinterpolation}
We now combine the two constructions above (the linear model of Section \ref{sec:simplest} and the ``isolation" strategy of Section \ref{sec:simplest2}) to give a construction that can trade off size for robustness (albeit not optimally according to Conjecture \ref{conj:UB}):

\begin{theorem} \label{thm:weakconjUB}
Let $n, d, k$ be such that $C \cdot \frac{n \log(n)}{d} \leq k \leq C \cdot n$. For generic data sets, with probability at least $1- 1/n^C$, there exists $f \in \cF_k(\mathrm{ReLU})$ fitting the data (i.e., satisfying \eqref{eq}) and such that
\[
\Lip_{\mS^{d-1}}(f) \leq C \cdot \frac{n \log(d)}{k} \,.
\]
\end{theorem}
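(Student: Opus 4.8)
The plan is to combine the exact least-norm linear fit of Section~\ref{sec:simplest} with the cap-isolation idea of Section~\ref{sec:simplest2}: partition the data into blocks of size $\approx n/k$ and spend one $\mathrm{ReLU}$ unit per block. Concretely, split $[n]$ first according to the sign of $y_i$, and then partition each sign class arbitrarily into blocks $S_1,\dots,S_J$, $J\le 2k$, of size $m:=\lceil n/k\rceil$; all points of $S_\ell$ then share a label $y_\ell\in\{-1,+1\}$, and I write $X_\ell\in\R^{m\times d}$ for the matrix of its points and $X_\ell^+:=X_\ell^\top(X_\ell X_\ell^\top)^{-1}$ for the pseudo-inverse. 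The hypothesis $k\ge C\,\tfrac{n\log n}{d}$ gives $m\le\tfrac{d}{C\log n}$, in particular $m\le c\,d$, so $\tfrac12 I_m\preceq X_\ell X_\ell^\top\preceq\tfrac32 I_m$ holds with probability $1-e^{-cd}$ ([Theorem~5.58, \cite{vershynin12}]). Fixing $R\in(1,2)$, I let $f$ be the sum over $\ell\in[J]$ of the neurons $x\mapsto \tfrac{y_\ell}{R-1}\,\mathrm{ReLU}(w_\ell\cdot x-1)$, where $w_\ell:=R\,X_\ell^+\mathbf{1}$, and abbreviate $\widehat{c}_\ell:=X_\ell^+\mathbf{1}=w_\ell/R$ (so $\|\widehat{c}_\ell\|^2=\mathbf{1}^\top(X_\ell X_\ell^\top)^{-1}\mathbf{1}\in[\tfrac23 m,2m]$) and $\widehat{C}:=[\widehat{c}_1|\cdots|\widehat{c}_J]\in\R^{d\times J}$.

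\emph{Exact fitting.} For $i\in S_\ell$ one has $w_\ell\cdot x_i=R\,(X_\ell X_\ell^+\mathbf{1})_i=R$, so the $\ell$-th neuron contributes $\tfrac{y_\ell}{R-1}\,\mathrm{ReLU}(R-1)=y_\ell=y_i$ at $x_i$, and it remains to see that all other neurons are switched off there, i.e. $w_{\ell'}\cdot x_i<1$ for $\ell'\ne\ell$. Since $\|w_{\ell'}\|\le R\sqrt{2m}$ and $x_i$ is independent of $w_{\ell'}$, a union bound over sub-Gaussian concentration yields $|w_{\ell'}\cdot x_i|\le C R\sqrt{m\log n/d}$ simultaneously over all $i,\ell'$ with probability $1-n^{-C}$, and $m\le d/(C\log n)$ with $C$ large makes this $<1$. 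Hence $f(x_i)=y_i$ for all $i$: exact fitting consumes precisely the slack in the hypothesis on $k$.

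\emph{Lipschitz bound.} Since $f$ is piecewise affine, $\Lip_{\mS^{d-1}}(f)$ is at most the largest gradient norm over the affine pieces of $f$ meeting the unit ball $\mB(0,1)$ (integrate the gradient along segments, which stay in $\mB(0,1)$); on a piece with activation set $\cA\subseteq[J]$ the gradient equals $\tfrac1{R-1}\sum_{\ell\in\cA}y_\ell w_\ell=\tfrac R{R-1}\,\widehat{C}_\cA\, y_\cA$ with $y_\cA\in\{\pm1\}^\cA$. The crux is a restricted-isometry estimate: \emph{with probability $1-n^{-C}$, $\|\widehat{C}_T\|_{\mathrm{op}}^2\le 3m$ for every $T\subseteq[J]$ with $|T|\le s:=c_0 d/\log n$.} Granting it, let a piece with activation set $\cA$ meet $\mB(0,1)$ at a point $z$; then $\widehat{c}_\ell\cdot z=(w_\ell\cdot z)/R>1/R$ for all $\ell\in\cA$, so choosing $T\subseteq\cA$ with $|T|=\min(|\cA|,s)$ gives $|T|/R^2\le\|\widehat{C}_T^\top z\|^2\le\|\widehat{C}_T\|_{\mathrm{op}}^2\|z\|^2\le 3m$, and since $s\ge 8R^2 m>3R^2 m$ (using $m\le d/(C\log n)$ with $C$ large) this is impossible unless $|\cA|<3R^2 m$. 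Applying the estimate to $T=\cA$ itself,
\[
\|\nabla f\|\ \le\ \tfrac{R}{R-1}\,\|\widehat{C}_\cA\|_{\mathrm{op}}\,\sqrt{|\cA|}\ \le\ \tfrac{R}{R-1}\sqrt{3m}\cdot\sqrt{3R^2 m}\ =\ \tfrac{3R^2}{R-1}\,m\ =\ O\!\big(\tfrac nk\big),
\]
which is even stronger than the claimed $O(n\log(d)/k)$; the $\log d$ merely leaves room for the union bounds.

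\emph{Main obstacle.} Everything hinges on the restricted-isometry estimate for $\widehat{C}$ — equivalently, on the uniform bound $\#\cA(z)=O(n/k)$ for the number of simultaneously active neurons at any $z\in\mB(0,1)$. The naive estimate from the full operator norm, $\#\cA(z)\le R^2\|\widehat{C}\|_{\mathrm{op}}^2=O\!\big(\tfrac nk\max(1,\tfrac nd)\big)$, loses a factor $n/d$ and is useless as soon as $n\gg d\,\mathrm{polylog}(d)$; the improvement must come from (i) the $\widehat{c}_\ell$'s being built from disjoint — hence independent — random sub-datasets, so the off-diagonal entries of $\widehat{C}^\top\widehat{C}$ have size $\approx m/\sqrt d$, which forces one to use the cancellation in $\langle\widehat{c}_\ell,\widehat{c}_{\ell'}\rangle=\mathbf{1}^\top(X_\ell X_\ell^\top)^{-1}X_\ell X_{\ell'}^\top(X_{\ell'}X_{\ell'}^\top)^{-1}\mathbf{1}$ across its $\approx m^2$ summands (crude norm bounds only give $\approx m^{3/2}/\sqrt d$), together with (ii) the bound $m\le d/(C\log n)$, which is exactly what makes the restricted operator norm of that off-diagonal part $o(m)$ up to sparsity $\Theta(d/\log n)\gg m$, by a net/chaining argument. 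Making this hold \emph{simultaneously} over the whole ball, and keeping the active count at $O(n/k)$ rather than, say, $O(d)$, is the one substantial step of the proof, and it is where the condition $k\gtrsim n\log n/d$ is genuinely needed.
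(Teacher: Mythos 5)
Your construction and exact-fitting argument coincide with the paper's: blocks of $m\approx n/k$ same-label points, least-norm linear fit per block, one $\mathrm{ReLU}$ cap per block, and the off-block deactivation via sub-Gaussian concentration together with $m\le d/(C\log n)$. The divergence is in the Lipschitz analysis, and that is where your proposal has a genuine gap: everything rests on the restricted-isometry claim $\|\widehat{C}_T\|_{\mathrm{op}}^2\le 3m$ for all $|T|\le c_0 d/\log n$, which you use twice (once, via the self-bounding argument, to cap the number of simultaneously active neurons at $O(m)$, and once to bound the gradient on each affine piece), but which you do not prove --- you yourself label it ``the one substantial step of the proof.'' As written the proposal is a reduction of the theorem to an unproved lemma, not a proof.

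Two remarks for comparison. The paper avoids needing any sharp restricted isometry by splitting the work into two cruder facts: (i) $\bigl\|\sum_{\ell\in A}w_\ell\bigr\|^2\le C|A|m$ for all $A$, using only independence and rotational invariance of the $w_\ell$ (pairwise inner products of size $\approx m/\sqrt d$); and (ii) a direct anti-concentration bound on the active set: for fixed $x$ and fixed $A$, $\P\bigl(\forall \ell\in A,\ w_\ell\cdot x\ge \|w_\ell\|/(4\sqrt m)\bigr)\le C\exp(-c|A|d/m)$, so a union bound over subsets of size $a=Cm\log d$ and over an $\epsilon$-net of the sphere yields $|A|\le Cm\log d$. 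The paper is thus content with an active-set bound weaker than your $O(m)$, and this is exactly where the $\log d$ in the statement comes from. Conversely, your restricted-isometry claim is true and fillable by standard tools: the columns $\widehat{c}_\ell$ are independent and rotationally invariant with $\|\widehat{c}_\ell\|^2$ concentrated near $m$, so after normalization $\widehat{C}$ has independent isotropic sub-Gaussian columns, and your sparsity level $s=c_0 d/\log n$ with $J\le 2Ck\le Cn$ satisfies the condition $s\log(eJ/s)\le c_0' d$ of the standard sub-Gaussian restricted-isometry theorem (see Section 5.6 of \cite{vershynin12}). Carried out, your route would give the stronger conclusion $\Lip_{\mS^{d-1}}(f)\le Cn/k$, shaving the paper's $\log d$; but that step must be written out, not asserted.
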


\begin{proof}
Let $m = \frac{n}{k}$ (by assumption $m \leq c \cdot \frac{d}{\log(n)}$) and assume it is an integer. Let us choose $m$ points with the same label, say it is the points $x_1, \hdots, x_m$ with label $+1$. As in Section \ref{sec:simplest} let $w \in \R^d$ be the minimal norm vector that satisfy $w \cdot x_i = 1$, and thus as we proved there with probability at least $1- \exp(C - c d)$ one has $\|w\| \leq \sqrt{2 m}$. Crucially for the end of the proof, also note that the distribution of $w$ is rotationally invariant.
Next observe that with probability at least $1- 1/n^C$ (with respect to the sampling of $x_{m+1}, \hdots, x_n$) one has $\max_{i \in \{m+1, \hdots, n\}} |w \cdot x_i| \leq C \cdot \|w\| \sqrt{\frac{\log(n)}{d}} \leq \frac{1}{2}$. In particular the cap $\mathcal{C} := \{x \in \mS^{d-1} : w \cdot x \geq \frac{1}{2} \}$ contains $x_1, \hdots, x_m$ but does not contain any $x_i$, $i>m$.
Thus the neuron
\[
x \mapsto 2 \cdot \mathrm{ReLU}\left(w \cdot x - \frac{1}{2}\right) \,,
\]
computes the value $1$ at points $x_1, \hdots, x_m$ and the value $0$ at the rest of the training set.
\newline

One can now repeat this process, and build the neurons $w_1, \hdots, w_{k}$ (all with norm $\leq \sqrt{2 m}$), so that (with well-chosen signs $\xi_{\ell} \in \{-1,1\}$) the data is perfectly fitted by the function:
\[
f(x) = \sum_{\ell=1}^k 2 \cdot \xi_{\ell} \cdot \mathrm{ReLU}\left(w_{\ell} \cdot x - \frac{1}{2}\right) \,.
\]
It only remains to estimate the Lipschitz constant. Note that if a point $x \in \mathbb{S}^{d-1}$ activates a certain subset $A \subset \{1,\hdots, k\}$ of the neurons, then the gradient at this point is $\sum_{\ell \in A} w_{\ell}'$ with $w_{\ell}' = 2 \xi_{\ell} w_{\ell}$. Using that the $w_{\ell'}$ are rotationally invariant, one also has with probability at least $1- C n^2 \exp(- c d)$ that $\left\| \sum_{\ell \in A} w_{\ell}' \right\|^2 \leq C \cdot |A| \cdot m$ for all $A \subset \{1,\hdots, k\}$. Thus it only remains to control how large $A$ can be. We show below that $|A| \leq C m \log(d)$ with probability at least $1- C \exp(- c d \log(d))$ which will conclude the proof.
\newline

If $x$ activates neuron $\ell$ then $w_{\ell} \cdot x \geq \frac12 \geq \frac{\|w_{\ell}\|}{4 \sqrt{m}}$. Now note that for any fixed $x \in \mS^{d-1}$ and fixed $A \subset [k]$, $\P\left( \forall \ell \in A, w_{\ell} \cdot x \geq \frac{\|w_{\ell}\|}{4 \sqrt{m}} \right) \leq C \exp\left( - c |A| \frac{d}{m} \right)$, so that
\[
\P\left(\exists A \subset [k] : |A| = a \text{ and } \forall \ell \in A, w_{\ell} \cdot x \geq \frac{\|w_{\ell}\|}{4 \sqrt{m}} \right) \leq \exp \left(C a \log(k) -  c a \frac{d}{m} \right) \,.
\]
In particular we conclude that with $a = C m \log(d)$ the probability that a fixed point on the sphere activates more than $a$ neuron is exponentially small in $d \log(d)$ (recall that $m \log(k) \leq c d$ by assumption). Thus we can conclude via an union bound on an $\epsilon$-net that the same holds for the entire sphere simultaneously. This concludes the proof.
\end{proof}

\subsection{Optimal size networks via tensor interpolation} \label{sec:tensorinterpolation}
In this section we essentially prove Conjecture \ref{conj:UB} in the optimal size regime (namely $k \cdot d \approx n$), with three caveats:
\begin{enumerate}
\item We allow a slack of a $\log n$ factor by considering $k \cdot d = C n \log (n)$ instead of the optimal $k \cdot d = C n$ as in \cite{baum1988capabilities, BELM20}.
\item We only prove approximate fit rather than exact fit. It is likely that with more work one can use the core of our argument to obtain exact fit. For that reason we did not make any attempt to optimize the dependency on $\epsilon$ in Theorem \ref{thm:tensorUB}. For instance one could probably obtain $\log(1/\epsilon)$ rather than $1/\mathrm{poly}(\epsilon)$ dependency by using an iterative scheme that fits the residuals, as in \citep{bresler2020corrective,BELM20}.
\item Finally we have to assume that $n$ is of order $d^p$ for some even integer $p$. While it might be that one can apply the same proof for odd integers, the whole construction crucially relies on $p$ being an even integer as we essentially do a linear regression over the feature embedding $x \mapsto x^{\otimes p}$. A possible approach to extend the proof to other values of $n$ would be to use the scheme of Section \ref{sec:ReLUinterpolation} with the linear regression there replaced by the tensor regression of the present section.
\end{enumerate}

\begin{theorem} \label{thm:tensorUB}
Fix $\epsilon >0$, $p$ an even integer, and let $\psi(t) = t^p$. Let $n,d,k$ be such that $n \log(n) = \epsilon^2 \cdot d^p$ and $k = C_p \cdot d^{p-1}$. Then for generic data, with probability at least $1- 1/n^C$, there exists $f \in \cF_k(\psi)$ such that
\begin{equation} \label{eq:approximatefit}
|f(x_i) - y_i| \leq C_p \cdot \epsilon \,, \forall i \in [n] \,,
\end{equation}
and
\[ 
\mathrm{Lip}_{\mS^{d-1}}(f) \leq C_p \sqrt{\frac{n}{k}} \,.
\]
\end{theorem}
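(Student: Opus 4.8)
\emph{Proposal.} The plan is to reduce the whole statement to three properties of a single random symmetric order-$p$ tensor, using the identity $(w\cdot x)^p=\langle w^{\otimes p},x^{\otimes p}\rangle$. Since it suffices to produce $f$ for one of the two generic-data models, I will use the spherical model, so that $\|x_i\|=1$ exactly. Set $T_0:=\sum_{i=1}^n y_i\,x_i^{\otimes p}$. For any decomposition $T_0=\sum_{\ell=1}^k a_\ell w_\ell^{\otimes p}$ with $a_\ell\in\R$, $w_\ell\in\R^d$, the function $f(x)=\sum_{\ell=1}^k a_\ell(w_\ell\cdot x)^p$ lies in $\cF_k(\psi)$ and coincides, as a function on $\R^d$, with $x\mapsto\langle T_0,x^{\otimes p}\rangle$. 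Hence it is enough to show: (i) $T_0$ has a real symmetric (Waring) decomposition with at most $k=C_p d^{p-1}$ terms; (ii) $\langle T_0,x_j^{\otimes p}\rangle\approx y_j$ for all $j$; and (iii) $x\mapsto\langle T_0,x^{\otimes p}\rangle$ restricted to $\mS^{d-1}$ is $O_p(\sqrt{n/k})$-Lipschitz.

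Claim (i) is deterministic: every real form of degree $p$ in $d$ variables has real Waring rank (with signs) at most $C_p\binom{d+p-2}{p-1}\le C_p d^{p-1}$ — over $\C$ this is the classical bound $\binom{d+p-2}{p-1}$, and passing to a real, signed decomposition costs only a factor $C_p$ (take real parts of a complex decomposition and reduce each resulting binary real form to $O_p(1)$ real powers). This is where the degree-$p$ polynomial activation and the constant $C_p$ in the statement enter; if $\mathrm{rank}(T_0)<k$ we pad with zero-coefficient neurons. For (ii), since $\|x_j\|=1$ we have $\langle T_0,x_j^{\otimes p}\rangle=y_j+\sum_{i\neq j}y_i(x_i\cdot x_j)^p$, and conditionally on $(x_i)_i$ the error is a mean-zero Rademacher sum with variance proxy $\sum_{i\neq j}(x_i\cdot x_j)^{2p}$. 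A routine estimate (truncate the rare event that some $|x_i\cdot x_j|\gg\sqrt{\log n/d}$, then Bernstein) gives $\sum_{i\neq j}(x_i\cdot x_j)^{2p}\le C_p\,n/d^p$ for all $j$ with probability $1-1/n^C$; Hoeffding plus a union bound over $j\in[n]$ then yields $\max_j\big|\sum_{i\neq j}y_i(x_i\cdot x_j)^p\big|\le C_p\sqrt{(n\log n)/d^p}=C_p\epsilon$, which is exactly where the hypothesis $n\log n=\epsilon^2 d^p$ is used.

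For (iii), symmetry of $T_0$ gives $\nabla_x\langle T_0,x^{\otimes p}\rangle=p\,T_0\!\cdot x^{\otimes(p-1)}$, so $\|\nabla_x\langle T_0,x^{\otimes p}\rangle\|\le p\,\|x\|^{p-1}\,\|T_0\|_{\mathrm{op}}$ with $\|T_0\|_{\mathrm{op}}:=\sup_{\|v\|=1}\big|\langle T_0,v^{\otimes p}\rangle\big|$ (the injective/spectral norm of the symmetric tensor $T_0$, using that for symmetric tensors this equals the supremum over unequal unit arguments); hence $\Lip_{\mS^{d-1}}(\langle T_0,\cdot^{\otimes p}\rangle)\le p\,\|T_0\|_{\mathrm{op}}$. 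Since $\|T_0\|_{\mathrm F}=(1+o_p(1))\sqrt n$ with high probability and $k=\Theta_p(d^{p-1})$, the target $\sqrt{n/k}\asymp_p\sqrt{n/d^{p-1}}$ is precisely the ``random-tensor'' ratio $\|T\|_{\mathrm{op}}/\|T\|_{\mathrm F}\asymp_p d^{-(p-1)/2}$, so what remains is the sharp bound
\[
\|T_0\|_{\mathrm{op}}=\sup_{v\in\mS^{d-1}}\Big|\sum_{i=1}^n y_i(x_i\cdot v)^p\Big|\ \le\ C_p\sqrt{n/d^{p-1}}\qquad\text{with probability }1-1/n^C .
\]

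I expect this last estimate to be the main obstacle. The lazy route — Rademacher-to-Gaussian symmetrization followed by a single $\epsilon$-net of $\mS^{d-1}$ — overcounts by a polylogarithmic factor, because the increment pseudometric $d(v,v')^2=\sum_i\big((x_i\cdot v)^p-(x_i\cdot v')^p\big)^2$ is spatially inhomogeneous on the sphere: of order $\sqrt{n/d^p}\,\|v-v'\|$ at a generic $v$, but as large as $\sqrt{n/d}\,\|v-v'\|$ for $v$ near a data point (where the coefficient $(x_j\cdot v)^p\approx1$ is not small). The plan to recover the sharp constant is a peeling/chaining argument: first isolate the $2n$ caps $\{v:\ |x_i\cdot v|>c\sqrt{\log n/d}\}$ around the data points, where the $i$-th term contributes at most $\max_i|g_i|\lesssim\sqrt{\log n}\ll\sqrt{n/d^{p-1}}$ (valid since $d\gg(\log n)^2/\epsilon^2$), and the remaining ``bulk'' over each small cap is handled by chaining with its now-generic metric; then, on the complement, where $\max_i|x_i\cdot v|\le c\sqrt{\log n/d}$, run Talagrand's generic chaining for the metric $d$, which there is genuinely of scale $\sqrt{n/d^p}\,\|\cdot\|$, so that $\gamma_2\lesssim_p\sqrt{n/d^p}\cdot\sqrt d=\sqrt{n/d^{p-1}}$. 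An alternative is to quote known sharp bounds for the injective norm of random symmetric tensors (equivalently, ground-state energies of $p$-spin models), after checking that $T_0$ — which, in the regime $n\gg d^{p-1}$, behaves like a suitably scaled Gaussian symmetric tensor — is covered by them. Assembling (i)–(iii) with a final union bound over the relevant high-probability events gives the theorem.
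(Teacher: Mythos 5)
Your construction coincides with the paper's: take $f(x)=\langle T_0,x^{\otimes p}\rangle$ with $T_0=\sum_{i=1}^n y_i x_i^{\otimes p}$, realize it with $C_p d^{p-1}$ neurons via a signed rank-one symmetric decomposition, control the cross terms $\sum_{i\neq j}y_i(x_i\cdot x_j)^p$ to get the approximate fit, and bound the Lipschitz constant by $p\,\|T_0\|_{\mathrm{op}}$. Your steps (i) and (ii) are sound; the paper handles (i) by an elementary explicit decomposition into $2^p d^{p-1}$ signed $p$-th powers of linear forms (Lemma \ref{lem:tensordecomposition}) rather than by invoking Waring rank, and (ii) by a concentration inequality for $\sum_i|x_i\cdot x|^p$ (Lemma \ref{lem:conctensor}), but your alternatives would also work.

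The genuine gap is step (iii). You correctly identify $\|T_0\|_{\mathrm{op}}\le C_p\sqrt{n/d^{p-1}}$ as the crux, but you leave it as a plan (cap-peeling plus generic chaining, or citing $p$-spin ground-state results) that is not carried out. Moreover, the premise motivating that machinery is mistaken: the single $\epsilon$-net argument does \emph{not} lose a polylogarithmic factor here. For a fixed $v\in\mS^{d-1}$ one has $\bigl|\sum_i y_i(x_i\cdot v)^p\bigr|\le C_p\sqrt{n\tau/d^{p}}$ with probability $1-C\exp(-c_p\tau)$ jointly over the $x_i$ and $y_i$ (Lemma \ref{lem:conctensor}); taking $\tau\asymp d$ absorbs the union bound over a $\tfrac{1}{2p}$-net of cardinality $C_p^d$ and yields exactly $C_p\sqrt{nd/d^{p}}=C_p\sqrt{n/d^{p-1}}$ on the net. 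The passage from the net to the whole sphere is not done by chaining with the inhomogeneous increment metric you describe, but by the self-bounding step $\|T_0\|_{\mathrm{op}}\le\max_{v\in\cN}\langle T_0,v^{\otimes p}\rangle+\tfrac12\|T_0\|_{\mathrm{op}}$, which uses Banach's theorem \eqref{eq:symnorm} together with the fact that $v\mapsto\langle T_0,v^{\otimes p}\rangle$ is $p\,\|T_0\|_{\mathrm{op}}$-Lipschitz on the sphere (Lemma \ref{lem:liptensor}). This is precisely the paper's Lemma \ref{lem:operatornormtensor}, and it closes the one step you left open with no chaining or peeling; to complete your write-up you would either reproduce that short argument or actually execute the considerably heavier program you sketch.
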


\begin{proof}
We propose to approximately fit with the following neural network:
\[
f(x) = \sum_{i=1}^n y_i (x_i \cdot x)^p \,.
\]
Naively one might think that this neural network requires $n$ neurons. However, it turns out that one can always decompose a symmetric tensor of order $p$ into $k=2^p d^{p-1}$ rank-$1$ symmetric tensors of order $p$, so that in fact $f \in \cF_k(\psi)$. For $p=2$ this simply follows from eigendecomposition and for general $p$ we give a simple proof in [Appendix \ref{app1}, Lemma \ref{lem:tensordecomposition}]. 

One also has by applying [Appendix \ref{app2}, Lemma \ref{lem:conctensor}] with $\tau = C_p \log(n)$ and doing an union bound, that with probability at least $1-1/n^C$, for any $j \in [n]$,
\[
\left| \sum_{i=1, i \neq j}^n y_i (x_i \cdot x_j)^p \right| \leq C_p \sqrt{\frac{n \log(n)}{d^{p}}} \leq C_p \epsilon \,.
\]
In particular this proves \eqref{eq:approximatefit}.

Thus it only remains to estimate the Lipschitz constant, which by [Appendix \ref{app1}, Lemma \ref{lem:liptensor}] is reduced to estimating the operator norm of the tensor $\sum_{i=1}^n y_i x_i^{\otimes p}$. We do so in [Appendix \ref{app2}, Lemma \ref{lem:operatornormtensor}].
\end{proof}

\section{Provable weaker versions of Conjecture \ref{conj:LB}} \label{sec:LB}
Conjecture \ref{conj:LB} can be made weaker along several directions. For example the quantity of interest $\Lip_{\mS^{d-1}}(f)$ can be replaced by various upper bound proxies for the Lipschitz constant. A mild weakening would be to replace it by the Lipschitz constant on the whole space (we shall in fact only consider this notion here). A much more severe weakening is to replace it by a quantity that depends on the spectral norm of the weight matrix (essentially ignoring the pattern of activation functions). For the latter proxy we actually give a complete proof, see Theorem \ref{thm:spectralnorm}, which in particular formally proves that ``overparametrization is a law of robustness for generic data sets''. Other interesting directions to weaken the conjecture include specializing it to common activation functions, or simply having a smaller lower bound on the Lipschitz constant. In Section \ref{sec:undercomplete} we prove the conjecture when $n$ is replaced by $d$ in the lower bound. We say that this inequality is in the ``very high-dimensional case'', in the sense that it matches the conjecture for $n \approx d$ (alternatively we also refer to it as the ``undercomplete case'', in the sense that only $k \leq d$ is relevant in this very high-dimensional scenario). In the moderately high-dimensional case ($n \gg d$) the proof strategy we propose in Section \ref{sec:undercomplete} cannot work. In Section \ref{sec:tensor} we give another argument for the latter case, specifically in the optimal size regime (i.e., $k \cdot d \approx n$) and for a power activation function, see Theorem \ref{thm:tensorLB}. We generalize this to polynomial activation functions in Section \ref{sec:polynomial}. In the specific case of a quadratic activation function we also show a lower bound that applies for any $k$ and which is in fact larger than the one given in Conjecture \ref{conj:LB}, see Theorem \ref{thm:quadratic} in Section \ref{sec:quadratic}.

\subsection{Spectral norm proxy for the Lipschitz constant} \label{sec:spectralnorm}
We can rewrite \eqref{eq:nnform} as
\begin{equation} \label{eq:nnform2}
f(x) = a^{\top} \psi(W x + b) \,,
\end{equation}
where $a=(a_1,\hdots, a_k) \in \R^k$, $b=(b_1,\hdots, b_k) \in \R^k$, $W \in \R^{k \times d}$ is the matrix whose $\ell^{th}$ row is $w_{\ell}$, and $\psi$ is extended from $\R \rightarrow \R$ to $\R^k \rightarrow \R^k$ by applying it coordinate-wise. We prove here the following:
\begin{theorem} \label{thm:spectralnorm}
Assume that $\psi$ is $L$-Lipschitz. For $f \in \cF_k(\psi)$ one has
\begin{equation} \label{eq:spectralnorm1}
\Lip(f) \leq L \cdot \|a\| \cdot \|W\|_{\mathrm{op}} \,.
\end{equation}
For a generic data set, if $f(x_i) = y_i, \forall i \in [n]$ and $f$ has no bias terms (i.e., $b=0$ in \eqref{eq:nnform2}), then with positive probability one has:
\begin{equation} \label{eq:spectralnorm2}
L \cdot \|a\| \cdot \|W\|_{\mathrm{op}} \geq \sqrt{\frac{n}{k}} \,.
\end{equation}
\end{theorem}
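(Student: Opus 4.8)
\medskip
\noindent\emph{Proof plan.} Inequality \eqref{eq:spectralnorm1} is a routine composition estimate: writing $f=(a^{\top}\cdot)\circ\psi\circ(W\,\cdot\,{+}\,b)$, one multiplies the Lipschitz constants of the three factors. The map $y\mapsto a^{\top}y$ is $\|a\|$-Lipschitz, the map $x\mapsto Wx+b$ is $\|W\|_{\mathrm{op}}$-Lipschitz, and the coordinatewise map $z\mapsto\psi(z)$ is $L$-Lipschitz because $\|\psi(z)-\psi(z')\|^{2}=\sum_{\ell}|\psi(z_{\ell})-\psi(z'_{\ell})|^{2}\le L^{2}\|z-z'\|^{2}$.

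For \eqref{eq:spectralnorm2} the plan is to measure the ``signal'' $Y=(y_{1},\dots,y_{n})$ by a single scalar identity and then bound how well a small network can reproduce it. Let $\bar y=\tfrac1n\sum_{i}y_{i}$. Since $f(x_{i})=y_{i}$ and $y_{i}^{2}=1$ we have the exact identity
\[
\sum_{i=1}^{n}(y_{i}-\bar y)\,f(x_{i})=\sum_{i=1}^{n}(y_{i}-\bar y)\,y_{i}=n\bigl(1-\bar y^{2}\bigr),
\]
and since $\E[\bar y^{2}]=1/n$, the right-hand side is $\ge n/2$ on an event of probability $\ge 1/2$. On the other hand, expanding $f(x_{i})=\sum_{\ell}a_{\ell}\psi(w_{\ell}\cdot x_{i})$ (this is where $b=0$ is used), the left-hand side equals $\sum_{\ell}a_{\ell}\langle v_{w_{\ell}},Y\rangle$, where for $w\in\R^{d}$ I write $v_{w}\in\R^{n}$ for the centered feature vector $(v_{w})_{i}=\psi(w\cdot x_{i})-\tfrac1n\sum_{j}\psi(w\cdot x_{j})$. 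Using $\|a\|_{1}\le\sqrt{k}\,\|a\|$ (Cauchy--Schwarz) and $\|w_{\ell}\|\le\|W\|_{\mathrm{op}}$ for all $\ell$, I would bound
\[
\tfrac n2\;\le\;\|a\|_{1}\max_{\ell}\bigl|\langle v_{w_{\ell}},Y\rangle\bigr|\;\le\;\sqrt{k}\,\|a\|\,\sup_{\|w\|\le\|W\|_{\mathrm{op}}}\bigl|\langle v_{w},Y\rangle\bigr|,
\]
so that everything reduces to the uniform estimate $\sup_{\|w\|\le R}|\langle v_{w},Y\rangle|\le CLR\sqrt n$, taken at $R=\|W\|_{\mathrm{op}}$; this yields $L\,\|a\|\,\|W\|_{\mathrm{op}}\ge c\sqrt{n/k}$.

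The uniform estimate is where the randomness of the labels enters. For a \emph{fixed} radius $R$, $Y$ is a vector of i.i.d.\ signs and $\E_{Y}\sup_{\|w\|\le R}|\langle v_{w},Y\rangle|$ is a symmetrized (Rademacher-type) average of the centered class $\{v_{w}:\|w\|\le R\}$. Two structural facts make it small: (i) the constant $\psi(0)$ cancels inside $v_{w}$, so $v_{w}$ only involves $\tilde\psi:=\psi-\psi(0)$, which is $L$-Lipschitz with $\tilde\psi(0)=0$, and Talagrand's contraction principle then lets me strip $\tilde\psi$ and reduce to the \emph{linear} class $\{x\mapsto w\cdot x:\|w\|\le R\}$; (ii) this is an $\ell_{2}$-ball, whose symmetrized complexity is dimension-free: $\E_{Y}\sup_{\|w\|\le R}\langle w,\textstyle\sum_{i}y_{i}x_{i}\rangle=R\,\E_{Y}\|\sum_{i}y_{i}x_{i}\|\le R\sqrt{\sum_{i}\|x_{i}\|^{2}}$, which for generic data is $\le CR\sqrt n$; the centering subtraction contributes only a lower-order term, controlled again through $\|X\|_{\mathrm{op}}\le\sqrt{\sum_{i}\|x_{i}\|^{2}}$. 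Markov's inequality turns this expectation bound into the desired inequality on a positive-probability event, and intersecting with $\{1-\bar y^{2}\ge 1/2\}$ finishes the argument.

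I expect the main obstacle to be that $W$ is chosen \emph{after} the labels are revealed, so one cannot simply apply concentration to a fixed $W$ — a cleverly designed network could try to align its weights with $Y$. The uniform (over $\|w\|\le\|W\|_{\mathrm{op}}$) bound above is the way around this, and the essential point is to keep the fixed activation \emph{inside} the contraction, so that the only supremum one pays for is over an $\ell_{2}$-ball of linear functionals (cost $O(1/\sqrt n)$), rather than over all weight directions or an $\ell_{\infty}$-type constraint (which would cost an extra factor $\sqrt d$ and destroy the bound). A secondary technical point is that the radius $R=\|W\|_{\mathrm{op}}$ is itself data-dependent; I would handle this by a union bound over a logarithmically bounded set of dyadic scales of $R$ (or a peeling argument), together with the trivial a priori constraints that any data-fitting network must satisfy.
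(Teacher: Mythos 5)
Your proof of \eqref{eq:spectralnorm1} is exactly the paper's (composition of the three Lipschitz factors). For \eqref{eq:spectralnorm2} the paper's argument is very short: it invokes [Proposition 1, BELM20] to get $\sum_{\ell=1}^k |a_\ell|\,\|w_\ell\| \geq \sqrt{n}/L$ with positive probability for bias-free interpolation of generic data, and then finishes with $\sum_\ell |a_\ell|\|w_\ell\| \leq \|a\|\,\|W\|_{\mathrm{F}} \leq \sqrt{k}\,\|a\|\,\|W\|_{\mathrm{op}}$. Your plan is the same route in spirit, except that you re-derive the cited proposition from scratch via symmetrization and Talagrand contraction; the ingredients you identify (centering to kill $\psi(0)$, contraction to strip the activation, the dimension-free Rademacher complexity $\E\|\sum_i y_i x_i\| \leq \sqrt{n}$ of the $\ell_2$-ball) are indeed the content of that proposition, and your Cauchy--Schwarz step $\|a\|_1 \leq \sqrt{k}\|a\|$, $\|w_\ell\| \leq \|W\|_{\mathrm{op}}$ plays the role of the paper's Frobenius-norm step.

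There is, however, one step in your plan that does not go through as stated: the treatment of the data-dependent radius $R = \|W\|_{\mathrm{op}}$. You obtain the bound $\sup_{\|w\|\leq R}|\langle v_w, Y\rangle| \leq CLR\sqrt{n}$ only in expectation for each \emph{fixed} $R$, and Markov then gives it with constant failure probability per scale. A union bound over dyadic scales of $R$ therefore fails unless (i) you upgrade Markov to an exponential concentration bound for the supremum (which is available by bounded differences, since changing one $y_i$ moves the supremum by $O(LR)$), \emph{and} (ii) you confine $\|W\|_{\mathrm{op}}$ a priori to boundedly many dyadic scales --- but no such two-sided a priori control of $\|W\|_{\mathrm{op}}$ alone is available (only the product $L\|a\|\|W\|_{\mathrm{op}} \geq c$ is forced by interpolation). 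The clean fix, implicit in the paper's reliance on the neuron-wise quantity $\sum_\ell |a_\ell|\|w_\ell\|$, is to make the bound scale-free from the start: write $w = tu$ with $\|u\|=1$ and absorb $t$ into the activation via $\psi_t(s) := (\psi(ts)-\psi(0))/t$, which is again $L$-Lipschitz and vanishes at $0$, so that $|\langle v_w, Y\rangle|/\|w\|$ is controlled by the Rademacher complexity of the single class $\{x \mapsto \phi(u\cdot x) : \phi \ L\text{-Lipschitz},\ \phi(0)=0,\ \|u\\|=1\}$. The price is that the contraction function now varies over the class, so the vanilla contraction principle (fixed contraction per coordinate) does not apply verbatim and one needs the version with a supremum over contractions (or a chaining argument over the one-dimensional Lipschitz class); this is precisely the technical content of the BELM20 proposition the paper cites. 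So: right architecture, correct first inequality, but the uniform-over-$R$ step needs to be reorganized as above rather than patched by dyadic peeling.
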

Note that we prove the inequality \eqref{eq:spectralnorm2} only with positive probability (i.e., there exists a data set where the inequality is true), but in fact it is easy to derive the statement with high probability using classical concentration inequalities.

\begin{proof}
Since $\psi : \R \rightarrow \R$ is $L$-Lipschitz, we have:
\[
f(x) - f(x') \leq \|a\| \cdot \|\psi(Wx +b) - \psi(W x' +b)\| \leq L \cdot \|a\| \cdot \|W x - W x'\| \leq L \cdot \|a\| \cdot \|W\|_{\mathrm{op}} \cdot \|x-x'\| \,,
\]
which directly proves \eqref{eq:spectralnorm1}.

Next, following the proof of [Proposition 1, \cite{BELM20}] one obtains that for a generic data set, with positive probability, one has (without bias terms):
\[
\sum_{\ell=1}^k |a_{\ell}| \cdot \|w_{\ell}\| \geq \frac{\sqrt{n}}{L} \,.
\]
It only remains to observe that:
\[
\frac{\sqrt{n}}{L} \leq \sum_{\ell=1}^k |a_{\ell}| \cdot \|w_{\ell}\| \leq \sqrt{\sum_{\ell=1}^k |a_{\ell}|^2 \cdot \sum_{\ell=1}^k \|w_{\ell}\|^2} = \|a\| \cdot \|W\|_{\mathrm{F}} \leq \sqrt{k} \cdot \|a\| \cdot \|W\|_{\mathrm{op}} \,,
\]
which concludes the proof of \eqref{eq:spectralnorm2}.
\end{proof}

\subsection{Undercomplete case} \label{sec:undercomplete}
Next we prove the conjecture in the high dimensional case $n \approx d$. More precisely we replace $n$ by $d$ in the conjectured lower bound. Importantly note that the resulting lower bound then becomes non-trivial only in the regime $k \leq d$ (the ``undercomplete case''). 

We consider in fact a slightly more general scenario than interpolation with a neural network, namely we simply assume that one interpolates the data with a function $f(x) = g(Px)$ where $P$ is a linear projection on a $k$-dimensional subspace (this clearly generalizes $f \in \cF_k(\psi)$, in fact it even allows for the non-linearity $\psi$ to depend on the data\footnote{It would be interesting to study whether allowing data-dependent non-linearities could affect the conclusion of our conjectures. Such study would need to crucially rely on having only one hidden layer, as it is known from the Kolmogorov-Arnold theorem that with two hidden layers and data-dependent non-linearities one can obtain perfect approximation properties with $k \leq d$ (albeit the non-linearities are non-smooth).}, or to have a different non-linearity for each neuron).

\begin{theorem} \label{thm:undercomplete}
Let $n \geq d$. Let $f: \R^d \rightarrow \R$ be a function such that $f(x_i) = y_i, \forall i \in [n]$ and moreover $f(x) = g(Px)$ for some differentiable function $g : \R^k \rightarrow \R$ and matrix $P \in \R^{k \times d}$. Then, for generic data, with probability at least $1-\exp(C - c d)$ one must have
\[
\mathrm{Lip}(f) \geq c \sqrt{\frac{d}{k}} \,.
\]
\end{theorem}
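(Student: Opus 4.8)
The plan is to reduce the claim to a two-point metric fact about the data \emph{after} projection to $\R^k$: with overwhelming probability some pair of data points lands at the typical inter-point scale $\sqrt{k/d}$ while carrying opposite labels, and then any function interpolating the data must have slope at least $\sqrt{d/k}$ between them.

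First I would put $P$ into canonical form. Factoring $P=(PP^{\top})^{1/2}Q$ with $Q\in\R^{k\times d}$ having orthonormal rows ($QQ^{\top}=I_k$) and absorbing the invertible factor $(PP^{\top})^{1/2}$ into $g$, we may assume $PP^{\top}=I_k$, and in particular $k\le d$ (if $\mathrm{rank}(P)<k$ the same argument runs with $k$ replaced by the rank, which only strengthens the bound). With orthonormal rows, for any $u\neq u'\in\R^k$ put $x=P^{\top}u$ and $x'=P^{\top}u'$: then $Px=u$, $Px'=u'$, and $\|x-x'\|=\|u-u'\|$ since $P^{\top}$ is an isometry on $\R^k$, whence $\Lip(f)\ge\Lip(g)$ and it suffices to bound $\Lip(g)$ from below. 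By the convention on generic data it is enough to treat the Gaussian model, where $z_i:=Px_i$ are i.i.d.\ $N(0,\tfrac1d I_k)$ and $g(z_i)=y_i$ for i.i.d.\ signs $y_i$ (differentiability of $g$ is not actually needed).

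Second, and this is the heart of the argument, I would use the $m=\lfloor n/2\rfloor$ \emph{disjoint} pairs $(z_{2i-1},z_{2i})$, $i\in[m]$. Here $z_{2i-1}-z_{2i}\sim N(0,\tfrac2d I_k)$, so $\tfrac d2\|z_{2i-1}-z_{2i}\|^2\sim\chi^2_k$ and Markov's inequality gives $\P(\|z_{2i-1}-z_{2i}\|^2\le 4k/d)\ge\tfrac12$; independently, $\P(y_{2i-1}\neq y_{2i})=\tfrac12$. On the intersection of these two events one has $\Lip(g)\ge |g(z_{2i-1})-g(z_{2i})|/\|z_{2i-1}-z_{2i}\|\ge 2/(2\sqrt{k/d})=\sqrt{d/k}$. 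The pairs being disjoint, the $m$ events are independent and each has probability $\ge\tfrac14$, so the probability that none of them occurs is at most $(3/4)^{m}\le(3/4)^{(d-1)/2}\le\exp(C-cd)$, where $n\ge d$ is used in the last step. Hence $\Lip(f)\ge\Lip(g)\ge\sqrt{d/k}$ with probability at least $1-\exp(C-cd)$ (in fact with the constant $1$).

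I do not expect a genuine obstacle; the two steps that need some care are the reduction $\Lip(f)\ge\Lip(g)$ — clean precisely because $f$ depends on $x$ only through the $k$-dimensional projection $Px$, which is where the single-hidden-layer structure is genuinely exploited — and producing an $\exp(-\Omega(d))$ rather than merely $o(1)$ failure probability, which is why one works with independent disjoint pairs instead of a union bound over all $\binom n2$ pairs, and why the hypothesis $n\ge d$ is exactly what makes the exponent scale with $d$. This also explains why the strategy cannot go past $\sqrt{d/k}$ and reach the conjectured $\sqrt{n/k}$ when $n\gg d$: it exploits only a single colliding pair, and the closest collision one can guarantee among the projected points is at their typical nearest-neighbor distance $\sqrt{k/d}$.
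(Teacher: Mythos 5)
Your proof is correct, and it takes a genuinely different route from the paper's. The paper's argument goes through the gradient: it pairs each $+1$ point with a $-1$ point, applies the mean value theorem on each segment to find a point $z_i$ with $|\nabla f(z_i)\cdot(x_i-x_{n+i})|\ge 1$, uses $\nabla f=P^{\top}\nabla g(P\cdot)$ to bound this by $\Lip(f)\,\|P(x_i-x_{n+i})\|$, and then controls $\sum_i\|Px_i'\|$ via Cauchy--Schwarz, the nuclear norm $\tr(P^{\top}P)=k$, and Vershynin's concentration bound $\|\sum_i x_i'x_i'^{\top}\|_{\mathrm{op}}\le Cn/d$ (which is where $n\ge d$ enters). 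Your argument is purely metric: you observe that the orthonormal-row reduction makes $P^{\top}$ an isometric section, so $\Lip(f)\ge\Lip(g)$, that the projected points $Px_i$ are i.i.d.\ $N(0,\tfrac1d I_k)$, and that among $\lfloor n/2\rfloor$ independent disjoint pairs, with probability $1-(3/4)^{\lfloor n/2\rfloor}$ some pair is within $2\sqrt{k/d}$ and carries opposite labels, forcing slope $\ge\sqrt{d/k}$. Your version needs no differentiability of $g$, no matrix concentration, and yields the constant $c=1$; it is strictly more elementary. What the paper's averaging-over-all-pairs version buys is that its witnessing gradients sit on chords between data points (hence near the unit sphere, closer in spirit to $\Lip_{\mS^{d-1}}$), and that it is the natural template one would try to strengthen beyond the $n\approx d$ regime, whereas your single-colliding-pair argument visibly saturates at $\sqrt{d/k}$ — a limitation you correctly identify, and which the paper itself acknowledges for its own strategy. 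Both arguments are robust to approximate rather than exact fitting. The only points deserving a line of care in a full write-up are the rank-deficient case (which you handle) and, if one insists on covering the spherical model directly rather than invoking the paper's either-model convention, the analogous second-moment bound $\E\|P(x_i-x_j)\|^2=2k/d$ there, which poses no difficulty.
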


\begin{proof}
Let us modify $g$ so that $P$ is simply an orthogonal projection operator (i.e., $P P^{\top} = \mathrm{I}_k$). Let us also assume for sake of notational simplicity that we have a balanced data set of size $2n$, that is with: $y_1, \hdots y_{n} = +1$ and $y_{n+1}, \hdots, y_{2n} = -1$. Let us denote $x_i' = x_i - x_{n+i}$ for $i \in [n]$. The sequence $x_i'$ is i.i.d. and satisfies $\E[ x_i' x_i'^{\top} ] = \frac{2}{d} \mathrm{I}_d$.

Now observe that on the segment $[x_i,x_{n+i}]$ (whose length is less than $2$), the function $f$ changes value from $+1$ to $-1$, and thus there exists $z_i \in [x_i, x_{n+i}]$ such that:
\[
1 \leq | \nabla f(z_i) \cdot (x_i - x_{n+i}) | = |\nabla f(z_i) \cdot x_i'| \,. 
\]
Moreover one has (using that $\nabla f(x) = P^{\top} \nabla g(x)$, and thus $\|\nabla g(x)\| = \|P \nabla f(x)\| \leq \mathrm{Lip}(f)$)
\[
| \nabla f(z_i) \cdot x_i' | = | \nabla g(Pz_i) \cdot (Px_i') | \leq \mathrm{Lip}(f) \cdot \|P x_i'\| \,.
\]
Combining the two above displays one has:
\[
\frac{n}{\mathrm{Lip}(f)} \leq \sum_{i=1}^n \|P x_i'\| \leq \sqrt{n \sum_{i=1}^n \|P x_i'\|^2} = \sqrt{n \sum_{i=1}^n x_i'^{\top} P^{\top} P x_i'} = \sqrt{n \langle \sum_{i=1}^n x_i' x_i'^{\top}, P^{\top} P \rangle_{\mathrm{HS}}} \,.
\]
Using [Theorem 5.39, \cite{vershynin12}] (specifically (5.23)) we know that with probability at least $1-\exp(C - c d)$ we have $\left\| \sum_{i=1}^n x_i' x_i'^{\top} \right\|_{\mathrm{op}} \leq C \frac{n}{d}$ (here we use $n \geq d$ too). Moreover we have $\|P^{\top} P\|_{\mathrm{op},*} = \mathrm{Tr}(P^{\top} P) = \mathrm{Tr}(P P^{\top}) = k$. Thus we have $\langle \sum_{i=1}^n x_i' x_i'^{\top}, P^{\top} P \rangle_{\mathrm{HS}} \leq C \frac{n \cdot k}{d}$ so that with the above display one obtains:
\[
\frac{n}{\mathrm{Lip}(f)} \leq n \sqrt{\frac{C k}{d}} \,,
\]
which concludes the proof.
\end{proof}

\subsection{Power activation} \label{sec:tensor}
We prove here the conjecture for the power activation function $\psi(t) = t^p$ with $p$ an integer and with no bias terms (we deal with general polynomials, including with bias, in the next section). Without bias such a network can be written as:
\begin{equation} \label{eq:tensornetwork}
f(x) = \sum_{\ell=1}^k a_{\ell} (w_{\ell} \cdot x)^p = \langle T, x^{\otimes p} \rangle \,,
\end{equation}
where $T = \sum_{\ell=1}^k a_{\ell} w_{\ell}^{\otimes p}$. As we already saw in the proof of Theorem \ref{thm:tensorUB} (see specifically [Appendix \ref{app1}, Lemma \ref{lem:tensordecomposition}]), without loss of generality we have $k \leq C_p d^{p-1}$. We now prove that tensor networks of the form \eqref{eq:tensornetwork} cannot obtain a Lipschitz constant\footnote{Note that without loss of generality one can assume $T$ to be symmetric, since we only consider how it acts on $x^{\otimes p}$. For symmetric tensors one has that the Lipschitz constant on the unit ball is lower bounded by the operator norm of $T$ thanks to \eqref{eq:symnorm}} better than $\sqrt{n/d^{p-1}}$, in accordance with Conjecture \ref{conj:LB} for full rank tensors (where $k \approx d^{p-1}$).

\begin{theorem} \label{thm:tensorLB}
Assume that we have a tensor $T$ of order $p$ such that
\[
\langle T, x_i^{\otimes p} \rangle = y_i, \forall i \in [n] \,.
\]
Then, for generic data, with probability at least $1- C \exp(- c_p d)$, one must have
\[
\|T\|_{\mathrm{op}} \geq c_p  \sqrt{\frac{n}{d^{p-1}}} \,.
\]
\end{theorem}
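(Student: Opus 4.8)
The plan is to squeeze a single scalar identity out of the $n$ fitting constraints and then dualize. Assume without loss of generality that $T$ is symmetric: only its action on $x^{\otimes p}$ matters, and symmetrizing does not increase $\|T\|_{\mathrm{op}}$. Multiplying the $i$-th constraint by $y_i$, summing, and using $y_i^2=1$ gives
\[
n \;=\; \sum_{i=1}^n y_i \langle T, x_i^{\otimes p}\rangle \;=\; \langle T, G\rangle, \qquad G := \sum_{i=1}^n y_i\, x_i^{\otimes p}\,.
\]
By the definition of the nuclear norm as the dual of the operator norm on symmetric tensors, $n = \langle T, G\rangle \le \|T\|_{\mathrm{op}}\cdot\|G\|_{\mathrm{op},*}$, so the whole theorem reduces to proving the (data-dependent but $T$-independent) bound $\|G\|_{\mathrm{op},*} \le C_p\sqrt{n\, d^{p-1}}$ with probability at least $1 - C\exp(-c_p d)$: combining the two inequalities then yields $\|T\|_{\mathrm{op}}\ge n/(C_p\sqrt{n d^{p-1}}) = c_p\sqrt{n/d^{p-1}}$. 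Note that only $\sum_i y_i\langle T, x_i^{\otimes p}\rangle \ge n/2$ is used, so the argument is robust to approximate fitting.

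The heart of the proof is thus the nuclear norm bound for the random tensor $G$. For $p=2$ this is clean: $G$ is a symmetric matrix, its eigendecomposition gives $\|G\|_{\mathrm{op},*}\le \sqrt d\,\|G\|_{\mathrm F}$, and since $\|x_i\|=1$ we have $\|G\|_{\mathrm F}^2 = n + \sum_{i\neq j}y_iy_j(x_i\cdot x_j)^2$, whose chaos term has mean zero and concentrates, so $\|G\|_{\mathrm F}\le\sqrt{2n}$ with the required probability and hence $\|G\|_{\mathrm{op},*}\le\sqrt{2nd}$. For general $p$ I would combine two ingredients. First, the tensor decomposition lemma ([Appendix \ref{app1}, Lemma \ref{lem:tensordecomposition}]) shows that every symmetric order-$p$ tensor, in particular $G$, has symmetric rank at most $C_p d^{p-1}$. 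Second, the concentration estimate behind Theorem \ref{thm:tensorUB} ([Appendix \ref{app2}, Lemma \ref{lem:operatornormtensor}]) gives $\|G\|_{\mathrm{op}}\le C_p\sqrt{n/d^{p-1}}$ in the relevant range of $n$, via an $\varepsilon$-net over the sphere and a Bernstein bound on $\sum_i y_i(x_i\cdot u)^p$ for each fixed $u$. If one can produce a decomposition $G = \sum_{\ell=1}^{C_p d^{p-1}} a_\ell\, w_\ell^{\otimes p}$ with $|a_\ell| \le C_p\|G\|_{\mathrm{op}}$ for every $\ell$ (for $p=2$ this is precisely the SVD), then $\|G\|_{\mathrm{op},*}\le\sum_\ell |a_\ell| \le C_p d^{p-1}\|G\|_{\mathrm{op}}\le C_p\sqrt{n d^{p-1}}$, as needed.

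The main obstacle is exactly this last step: for tensors of order $p\ge 3$ there is no singular value decomposition, and the matrix facts $\|M\|_{\mathrm{op},*}\le\sqrt{\mathrm{rank}(M)}\,\|M\|_{\mathrm F}$ and $\|M\|_{\mathrm{op},*}\le\mathrm{rank}(M)\,\|M\|_{\mathrm{op}}$ do not extend with a loss depending only on $p$. So the decomposition of $G$ into $O_p(d^{p-1})$ rank-one atoms must be produced \emph{together} with control on the magnitudes of its coefficients, and the only source of such control is the genericity of the data: since $|x_i\cdot x_j| = O(\sqrt{\log n/d})$ for $i\neq j$ with high probability, the atoms $x_i^{\otimes p}$ are pairwise nearly orthogonal, the Gram matrix $((x_i\cdot x_j)^p)_{i,j}$ is well conditioned, and one hopes to compress the trivial $n$-term representation $G=\sum_i y_i x_i^{\otimes p}$ down to one with $O_p(d^{p-1})$ near-orthogonal atoms whose coefficients are then controlled by $\|G\|_{\mathrm F}\approx\sqrt n$ (equivalently by $\|G\|_{\mathrm{op}}$ once conditioning is accounted for). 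All the probabilistic inputs — concentration of $\|G\|_{\mathrm F}$, of $\|G\|_{\mathrm{op}}$, and of the inner products $x_i\cdot x_j$ — are routine $\varepsilon$-net plus Bernstein arguments and deliver the stated $1-C\exp(-c_p d)$ probability; the genuinely delicate point is the tensor-algebraic one of converting "low symmetric rank" into "small nuclear norm" for this particular ensemble.
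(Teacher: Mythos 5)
Your first display and the dualization $n=\langle T,G\rangle\le\|T\|_{\mathrm{op}}\cdot\|G\|_{\mathrm{op},*}$ are the same opening move as the paper's proof, but you then apply H\"older in the opposite direction from the paper: you place the nuclear norm on the \emph{random} tensor $G$ and are led to the problem of bounding $\|G\|_{\mathrm{op},*}$, which for $p\ge 3$ you correctly identify you cannot finish --- there is no SVD, and ``symmetric rank $O_p(d^{p-1})$'' does not by itself control the coefficients of a rank-one decomposition. As written, this is a genuine gap: the entire theorem for $p\ge 3$ rests on an unproved nuclear-norm estimate.

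The gap is, however, closed by a fact you overlooked and which the paper proves as Lemma \ref{lem:tensornorm}: for \emph{every} order-$p$ tensor $S$, regardless of rank, one has $\|S\|_{\mathrm{op},*}\le d^{p-1}\|S\|_{\mathrm{op}}$. This needs no decomposition with controlled coefficients; it follows from slicing $S$ along its first $p-1$ indices into $d^{p-1}$ orthogonal rank-one pieces, which gives $\|S\|_{\mathrm{op},*}\le d^{(p-1)/2}\|S\|_{\mathrm{F}}$, combined with the dual inequality $\|S\|_{\mathrm{F}}\le d^{(p-1)/2}\|S\|_{\mathrm{op}}$. The paper applies this to the unknown tensor $T$, writing $n=\langle T,G\rangle\le\|G\|_{\mathrm{op}}\cdot\|T\|_{\mathrm{op},*}\le d^{p-1}\|G\|_{\mathrm{op}}\|T\|_{\mathrm{op}}$, so that the only probabilistic input is $\|G\|_{\mathrm{op}}\le C_p\sqrt{n/d^{p-1}}$ (Lemma \ref{lem:operatornormtensor}, the $\epsilon$-net argument you already have). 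Equivalently, in your orientation, the same two lemmas give $\|G\|_{\mathrm{op},*}\le d^{p-1}\|G\|_{\mathrm{op}}\le C_p\sqrt{nd^{p-1}}$, exactly the bound you were missing. Note this also explains why the theorem's lower bound is $\sqrt{n/d^{p-1}}$ rather than $\sqrt{n/k}$: the unconditional factor $d^{p-1}$ is what the argument pays in place of rank, which is why the paper treats low-rank improvements separately (and only for $p=2$, where $\|T\|_{\mathrm{op},*}\le k\|T\|_{\mathrm{op}}$ is available). Your $p=2$ computation via $\|G\|_{\mathrm{op},*}\le\sqrt{d}\,\|G\|_{\mathrm{F}}$ is fine in spirit, though the claimed exponential concentration of the off-diagonal chaos term deserves a real argument rather than an assertion.
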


\begin{proof}
Denoting $\Omega = \sum_{i=1}^n y_i x_i^{\otimes p}$, we have (using $y_i^2 =1$ for the first equality and [Appendix \ref{app1}, Lemma \ref{lem:tensornorm}] for the last inequality):
\begin{equation} \label{eq:tensorLB}
n = \langle T, \Omega \rangle \leq \|\Omega\|_{\mathrm{op}} \cdot \|T\|_{\mathrm{op},*} \leq d^{p-1} \cdot \|\Omega\|_{\mathrm{op}} \cdot \|T\|_{\mathrm{op}} \,.
\end{equation}
Thus we obtain $\|T\|_{\mathrm{op}} \geq \frac{n}{d^{p-1} \cdot \|\Omega\|_{\mathrm{op}}}$, and it only remains to apply [Appendix \ref{app2}, Lemma \ref{lem:operatornormtensor}] which states that with probability at least $1- C \exp(- c_p d)$ one has $\|\Omega\|_{\mathrm{op}} \leq C_p \sqrt{\frac{n}{d^{p-1}}} $. 
\end{proof}

\subsection{Polynomial activation} \label{sec:polynomial}
We now observe that one can generalize Theorem \ref{thm:tensorLB} to handle biases, and in fact even general polynomial activation function. Indeed, observe that any polynomial of $\langle w,x\rangle - b$ must also be a polynomial in $\langle w,x \rangle$, albeit with different coefficients. 

\begin{theorem} \label{thm:polynomialLB}
Let $\psi(t) = \sum_{q=0}^p \alpha_q t^q$ and assume that we have $f \in \cF_k(\psi)$ such that $f(x_i) = y_i, \forall i \in [n]$. Then, for generic data, with probability at least $1-C \exp(- c_p d)$ one must have
\[
\mathrm{Lip}_{\{x : \|x\| \leq 1\}} (f) \geq c_p \sqrt{\frac{n}{d^{p-1}}} \,.
\]
\end{theorem}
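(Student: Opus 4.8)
The plan is to use the observation made just before the statement, that a network with polynomial activation is itself a polynomial of degree $\le p$ in $x$: expanding each $\psi(w_\ell\cdot x + b_\ell)$ as a polynomial in $w_\ell\cdot x$ and collecting terms by total degree, one writes $f(x)=\sum_{q=0}^p \langle T_q, x^{\otimes q}\rangle$ for symmetric tensors $T_q$ of order $q$ on $\R^d$ (the coefficients depend on $a_\ell,b_\ell,\alpha_q$, but the number of neurons $k$ never enters the argument, exactly as in Theorem \ref{thm:tensorLB}). With this in hand there are two things to show: (i) each $\|T_q\|_{\mathrm{op}}$ with $q\ge 1$, and also $|f(0)|$, is controlled by $L:=\mathrm{Lip}_{\{\|x\|\le 1\}}(f)$; and (ii) the interpolation constraint forces $\sum_q \langle T_q,\Omega_q\rangle$ to be of order $n$, where $\Omega_q:=\sum_i y_i x_i^{\otimes q}$.

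For (i) I would restrict $f$ to a line. For a unit vector $u$, the map $\phi_u(t):=f(tu)$ is a univariate polynomial of degree $\le p$ whose coefficient of $t^q$ equals $\langle T_q,u^{\otimes q}\rangle$, and for $t\in[-1,1]$ one has $tu$ in the unit ball, hence $|\phi_u'(t)|=|\nabla f(tu)\cdot u|\le L$. Taking $u$ to realize the operator norm of $T_q$ (possible for symmetric tensors by \eqref{eq:symnorm}) and applying the Markov brothers' inequality iteratively to $\phi_u'$ (which has degree $\le p-1$) bounds the $q$-th coefficient by $C_p\|\phi_u'\|_{[-1,1]}\le C_p L$, so $\|T_q\|_{\mathrm{op}}\le C_p L$ for every $q\in\{1,\dots,p\}$. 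For the constant term, since $f(x_i)=y_i\in\{-1,1\}$ and (working in the spherical model) $\|x_i\|=1$, we get $|f(0)|\le |f(x_i)|+L\|x_i\|\le 1+L$.

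For (ii), using $y_i^2=1$ and $f(x_i)=y_i$ gives $n=\sum_i y_i f(x_i)=\sum_{q=0}^p\langle T_q,\Omega_q\rangle$. For $q\ge 1$, tensor duality together with the dimension bound of [Appendix \ref{app1}, Lemma \ref{lem:tensornorm}] gives $\langle T_q,\Omega_q\rangle\le \|T_q\|_{\mathrm{op},*}\,\|\Omega_q\|_{\mathrm{op}}\le d^{q-1}\|T_q\|_{\mathrm{op}}\,\|\Omega_q\|_{\mathrm{op}}$, while [Appendix \ref{app2}, Lemma \ref{lem:operatornormtensor}] applied with order $q$ (and a union bound over the $p$ values of $q$) gives $\|\Omega_q\|_{\mathrm{op}}\le C_p\sqrt{n/d^{q-1}}$ with probability $1-C\exp(-c_p d)$; combined with (i) this term is at most $C_p L\sqrt{n d^{q-1}}\le C_p L\sqrt{n d^{p-1}}$. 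For $q=0$, $\langle T_0,\Omega_0\rangle=f(0)\sum_i y_i$, and $|\sum_i y_i|\le C\sqrt n$ with overwhelming probability, so this term is at most $C(1+L)\sqrt n$. Altogether $n\le C_p L\sqrt{n d^{p-1}}+C(1+L)\sqrt n$; assuming for contradiction $L<c_p\sqrt{n/d^{p-1}}$ for a small enough $c_p$ makes the right-hand side strictly less than $n$ (the $\sqrt n$ contributions being of lower order once $n$ exceeds an absolute constant, which we may assume since otherwise two oppositely-labelled data points already force $\mathrm{Lip}(f)\ge 1$), yielding the claimed bound.

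The main obstacle is step (i): a priori the lower-degree tensors $T_0,\dots,T_{p-1}$ may be enormous and could in principle conspire to keep $\nabla f$ small on the unit ball while still letting $f$ fit the $\pm1$ labels through large values of $\langle T_p,\cdot\rangle$. So one genuinely needs a quantitative "every coefficient of a polynomial with bounded derivative on $[-1,1]$ is small" statement — which Markov's inequality supplies along a one-dimensional slice — and, pleasingly, the very same device simultaneously tames the nuisance lower-degree terms $\langle T_q,\Omega_q\rangle$ on the data-fitting side. Everything else (tensor duality, the operator-norm concentration estimate) is inherited verbatim from the proof of Theorem \ref{thm:tensorLB}.
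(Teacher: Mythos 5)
Your proof is correct and follows essentially the same route as the paper's: decompose the polynomial network as $f(x)=\sum_q\langle T_q,x^{\otimes q}\rangle$, use $n=\sum_q\langle T_q,\Omega_q\rangle$ together with Lemma \ref{lem:tensornorm} and Lemma \ref{lem:operatornormtensor}, and invoke Markov brothers' inequality on a one-dimensional slice to tie $\|T_q\|_{\mathrm{op}}$ to the Lipschitz constant on the unit ball. The only (harmless) differences are organizational: you bound all $\|T_q\|_{\mathrm{op}}$ by $C_pL$ at once rather than extracting a single $q$ with $\langle T_q,\Omega_q\rangle\geq c_p n$, and you control the $q=0$ term via $|f(0)|\le 1+L$ and $|\sum_i y_i|\le C\sqrt n$ instead of passing to a balanced subset of the data.
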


\begin{proof}
Note that for $f \in \cF_k(\psi)$ there exists tensors $T_0, \hdots, T_p$, such that $T_q$ is a tensor of order $q$, and $f$ can be written as:
\[
f(x) = \sum_{q=0}^p \langle T_q, x^{\otimes q} \rangle \,.
\]
Now let us define $\Omega_q = \sum_{i=1}^n y_i x_i^{\otimes q}$, and observe that
\[
n = \sum_{i=1}^n y_i f(x_i) = \sum_{q=0}^p \langle T_q, \Omega_q \rangle \,,
\]
and thus there exists $q \in \{1,\hdots, p\}$ such that $\langle T_q, \Omega_q \rangle \geq c_p n$ (we ignore the term $q=0$ by considering the largest balanced subset of the data, i.e. we assume $\sum_{i=1}^n y_i = 0$). Now one can repeat the proof of Theorem \ref{thm:tensorLB} to obtain that  with probability at least $1-C \exp(- c_p d)$, one has $\|T_q\|_{\mathrm{op}} \geq c_p  \sqrt{\frac{n}{d^{p-1}}}$. It only remains to observe that the Lipschitz constant of $f$ on the unit ball is lower bounded by $ \|T_q\|_{\mathrm{op}}$.

As we mentioned in Section \ref{sec:tensor}, without loss of generality we can assume $T_q$ is symmetric, and thus by \eqref{eq:symnorm} there exists $x \in \mS^{d-1}$ such that $\|T_q\|_{\mathrm{op}} = \langle T_q, x^{\otimes q} \rangle$. Now consider the univariate polynomial $P(t) = f(t x)$. By Markov brothers' inequality one has $\max_{t \in [-1,1]} P(t) \geq |P^{(q)}(0)| = q! \cdot |\langle T_q, x^{\otimes q} \rangle| = q! \cdot \|T_q\|_{\mathrm{op}}$, thus concluding the proof.
\end{proof}

\subsection{Quadratic activation} \label{sec:quadratic}
In Section \ref{sec:tensor} we obtained a lower bound for tensor networks that match Conjecture \ref{conj:LB} only when the rank of the corresponding tensor is maximal. Here we show that for quadratic networks (i.e., $p=2$) we can match Conjecture \ref{conj:LB}, and in fact even obtain a better bound, for any rank $k$:

\begin{theorem} \label{thm:quadratic}
Assume that we have a matrix $T \in \R^{d \times d}$ with rank $k$ such that:
\[
\langle T, x_i^{\otimes 2} \rangle = y_i, \forall i \in [n] \,.
\]
Then, for generic data, with probability at least $1- C \exp(-  c d)$, one must have
\[
\|T\|_{\mathrm{op}} \geq c  \frac{\sqrt{n d}}{k} \ \ ( \geq c \sqrt{n / k} ) \,.
\]
\end{theorem}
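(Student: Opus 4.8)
The plan is to follow verbatim the argument of Theorem \ref{thm:tensorLB} specialized to $p=2$, with one change: since here $T$ is genuinely a $d\times d$ matrix of rank $k$, I will bound its nuclear norm by $k\|T\|_{\mathrm{op}}$ (a sum of at most $k$ singular values, each at most $\|T\|_{\mathrm{op}}$) rather than by the generic tensor bound $\|T\|_{\mathrm{op},*}\le d^{p-1}\|T\|_{\mathrm{op}}$ of Lemma \ref{lem:tensornorm}; this is exactly what sharpens $\sqrt{n/k}$ to $\sqrt{nd}/k$. Concretely, set $\Omega = \sum_{i=1}^n y_i x_i^{\otimes 2} = \sum_{i=1}^n y_i x_i x_i^{\top}$. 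Using $y_i^2 = 1$ and the interpolation constraint $\langle T, x_i^{\otimes 2}\rangle = y_i$, we get $n = \sum_{i=1}^n y_i^2 = \sum_{i=1}^n y_i \langle T, x_i^{\otimes 2}\rangle = \langle T, \Omega\rangle$. By the trace (Hölder) inequality for Schatten norms, $\langle T,\Omega\rangle \le \|T\|_{\mathrm{op},*}\cdot\|\Omega\|_{\mathrm{op}} \le k\,\|T\|_{\mathrm{op}}\cdot\|\Omega\|_{\mathrm{op}}$, hence $\|T\|_{\mathrm{op}} \ge \frac{n}{k\,\|\Omega\|_{\mathrm{op}}}$.

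It then remains to control $\|\Omega\|_{\mathrm{op}}$, which is precisely Lemma \ref{lem:operatornormtensor} with $p=2$: for generic data, with probability at least $1-C\exp(-cd)$ one has $\|\Omega\|_{\mathrm{op}} \le C\sqrt{n/d}$. Substituting, $\|T\|_{\mathrm{op}} \ge \frac{n}{k\,C\sqrt{n/d}} = c\,\frac{\sqrt{nd}}{k}$. Finally, the parenthetical follows because a $d\times d$ matrix of rank $k$ has $k\le d$, so $\frac{\sqrt{nd}}{k} = \sqrt{\tfrac{n}{k}}\cdot\sqrt{\tfrac{d}{k}} \ge \sqrt{n/k}$.

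Two remarks. First, no symmetry of $T$ is needed: both the trace inequality and the nuclear-norm-versus-rank bound hold for arbitrary matrices, and $\Omega$ is symmetric automatically. (When $f$ is an honest quadratic network $f(x)=\sum_\ell a_\ell(w_\ell\cdot x)^2$, then $T=\sum_\ell a_\ell w_\ell w_\ell^{\top}$ is symmetric of rank $\le k$, and exactly as in Section \ref{sec:polynomial} the operator-norm lower bound transfers to a lower bound on $\mathrm{Lip}_{\{\|x\|\le 1\}}(f)$ via \eqref{eq:symnorm} and Markov's inequality.) Second, there is no genuine obstacle here given Lemma \ref{lem:operatornormtensor}; the only idea is to use the true rank $k$ in place of the ambient dimension. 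If one instead wanted a self-contained argument, the sole delicate point would be re-deriving $\bigl\|\sum_i y_i x_i x_i^{\top}\bigr\|_{\mathrm{op}} \le C\sqrt{n/d}$: this is a Rademacher chaos in the $x_i$, for which matrix Bernstein gives the bound up to a $\sqrt{\log d}$ factor, and one must either remove the logarithm or simply observe it is harmless in the relevant regime $n\gtrsim d\log d$.
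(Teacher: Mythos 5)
Your proposal is correct and follows exactly the paper's argument: the paper likewise proves this by repeating the proof of Theorem \ref{thm:tensorLB} with $p=2$, replacing the bound $\|T\|_{\mathrm{op},*}\leq d^{p-1}\|T\|_{\mathrm{op}}$ of Lemma \ref{lem:tensornorm} by $\|T\|_{\mathrm{op},*}\leq k\|T\|_{\mathrm{op}}$ for a rank-$k$ matrix, and then invoking Lemma \ref{lem:operatornormtensor}. Your additional remarks (symmetry being unnecessary and $k\le d$ giving the parenthetical) are accurate but not needed.
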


\begin{proof}
The proof is exactly the same as for Theorem \ref{thm:tensorLB}, except that in \eqref{eq:tensorLB}, instead of using Lemma \ref{lem:tensornorm} we use the fact that for a matrix $T$ of rank $k$ one has:
\[
\|T\|_{\mathrm{op},*} \leq k \cdot \|T\|_{\mathrm{op}} \,.
\]
\end{proof}

\section{Experiments} \label{sec:exp}
We consider a generic dataset from the Gaussian model (i.e., $x_1, \hdots, x_n$ i.i.d. from $\mathcal{N}(0, \tfrac{1}{d}I_d)$ and labels $y_1, \hdots, y_n$  i.i.d from the uniform distribution over $\{-1,1\}$ and independent of $x_1,\dots, x_n$). For various values of $(n,d,k)$ we train two-layers neural networks with $k$ $\mathrm{ReLU}$ units and batch normalization (see \cite{ioffe2015batch}) between the linear layer and $\mathrm{ReLU}$ layer, using the Adam optimizer \citep{kingma2014adam} on the least squares loss. We keep the values of $(n,k,d)$ where the network successfully memorizes the random labels (possibly after a rounding to $\{-1,+1\}$, and such that prior to rounding the least squares loss is at most some small value $\epsilon$ to be specified later). Given a triple $(n,d,k)$, suppose the output of the trained network is $f_{n,d,k} : \mathbb{R}^d \to \mathbb{R}$. We then generate $z_1,\dots, z_T$ (where $T=1000$) i.i.d from the distribution $\mathcal{N}(0,\tfrac{1}{d}I_d)$, independently of everything else and define the ``maximum random gradient'' to be $\max_{i\in [T]}\|\nabla f_{n,k,d}(z_i)\|$ (it is our proxy for the true Lipschitz constant $\sup_{z \in \mS^{d-1}} \|\nabla f_{n,d,k}(z)\|$). Our experimental results are as follows:

\paragraph{Experiment 1:} We ran experiments with $n$ between $100$ and $2000$, $d$ between $\sim 50$ and $\sim n$, and $k$ between $\sim 10$ and $\sim n$ (we also choose $\epsilon=0.02$ for the thresholding). In Figure~\ref{fig:scatter_plot_1} we give a scatter plot of $\left(\sqrt{\frac{n}{k}}, \max_{i\in [T]}\|\nabla f_{n,k,d}(z_i)\| \right)$, and as predicted we see a linear trend, thus providing empirical evidence for Conjecture~\ref{conj:LB}.

\begin{figure}
    \centering
    \begin{minipage}{0.45\textwidth}
        \centering
        \includegraphics[width=0.9\textwidth]{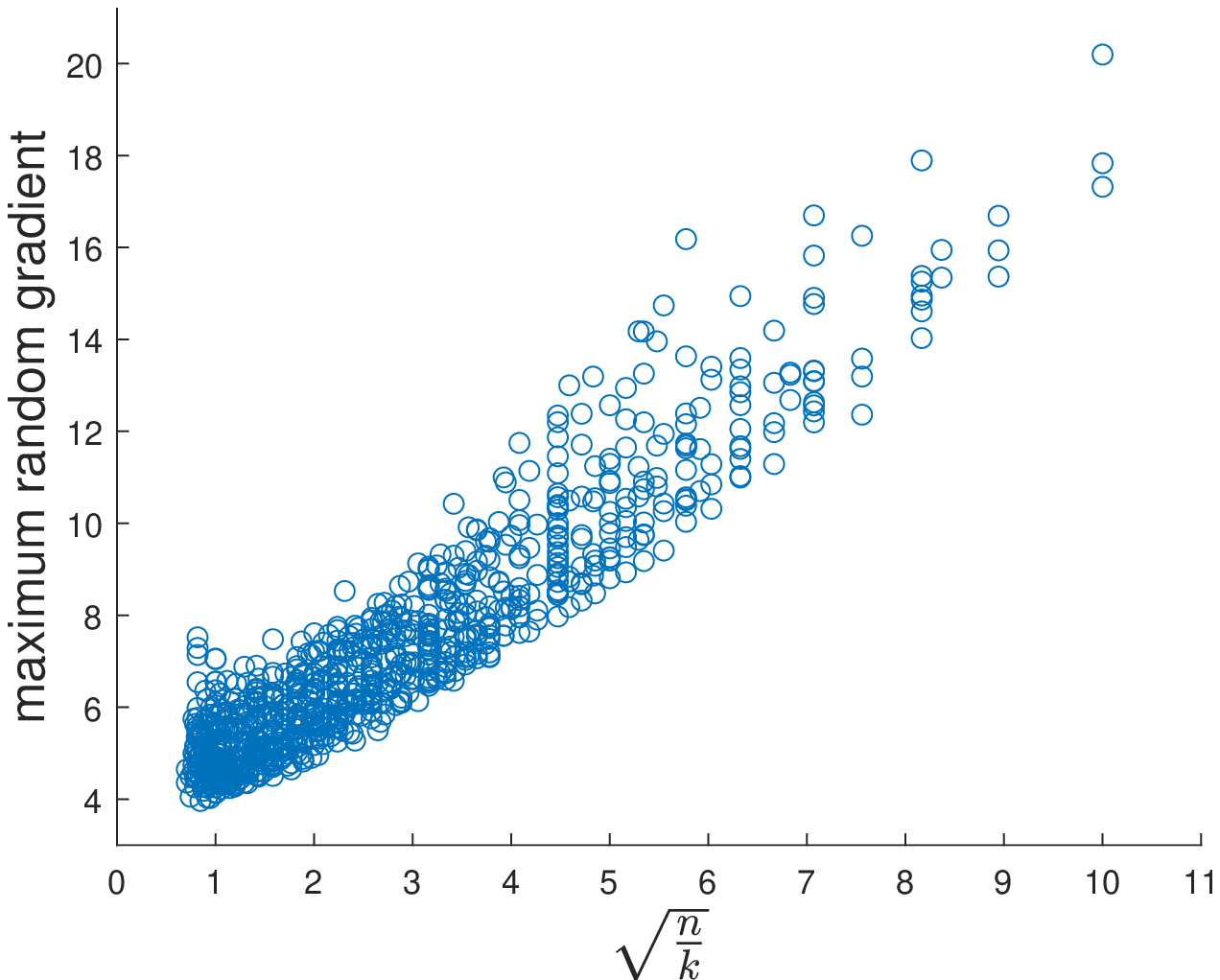} 
        \caption{Scatter plot of maximum random gradient with respect to $\sqrt{\frac{n}{k}}$ with $906$ data points (Experiment 1)}
        \label{fig:scatter_plot_1}
    \end{minipage}\hfill
    \begin{minipage}{0.45\textwidth}
        \centering
        \includegraphics[width=0.9\textwidth]{scatter_plot_2} 
        \caption{Scatter plot of maximum random gradient with respect to $\sqrt{d}$ in optimal smoothness (blue) and optimal size (red) regimes (Experiment 2)}
        \label{fig:scatter_plot_2}
    \end{minipage}
\end{figure}

\paragraph{Experiment 2:}

In this experiment, we investigate the two extreme cases $k \sim n$ and $k \sim n/d$. We fix $n = 10^4$ and sweep the value of $d$ between $10$ to $5000$ (we also choose $\epsilon=0.1$ for the thresholding). In the first case, we let $k = n$ and in the second case we let $k = 10n/d$. In Figure~\ref{fig:scatter_plot_2} we plot $\sqrt{d}$ versus the maximum random gradient (as defined above) for both cases. We observe a linear dependence between the maximum gradient value and $\sqrt{d}$ when we have $k = 10n/d$, and roughly a constant maximum gradient value when $k=n$, thus providing again evidence for Conjecture~\ref{conj:LB}

\bibliographystyle{plainnat}
\bibliography{neuralbib}

\appendix
\section{Results on tensors} \label{app1}
A tensor of order $p$ is an array $T = (T_{i_1, \hdots, i_p})_{i_1, \hdots, i_p \in [d]}$. The Frobenius inner product for tensors is defined by:
\[
\langle T, S \rangle = \sum_{i_1,\hdots, i_p=1}^d T_{i_1,\hdots,i_p} S_{i_1, \hdots, i_p} \,,
\]
with the corresponding norm $\|\cdot\|_{\mathrm{F}}$. A tensor is said to be of rank $1$ if it can be written as:
\[
T= u_1 \otimes \hdots \otimes u_p \,,
\]
for some $u_1, \hdots, u_p \in \R^d$. The operator norm $\|\cdot\|_{\mathrm{op}}$ is defined by:
\[
\|T\|_{\mathrm{op}} = \sup_{S \text{ rank } 1, \|S\|_{\mathrm{F}} \leq 1} \langle T, S\rangle \,.
\]
For symmetric tensors (i.e., such that the entries of the array are invariant under permutation of the $p$ indices), Banach's Theorem (see e.g., [(2.32), \cite{nemirovski2004interior}]) states that in fact one has
\begin{equation} \label{eq:symnorm}
\|T\|_{\mathrm{op}} = \sup_{x \in \mS^{d-1}} \langle T, x^{\otimes p} \rangle \,.
\end{equation}
We refer to \cite{friedland2018nuclear} for more details and background on tensors. We now list a couple of useful results, with short proofs.

\begin{lemma} \label{lem:liptensor}
For a tensor $T$ of order $p$, one has 
\[
\mathrm{Lip}_{\mS^{d-1}}(x \mapsto \langle T, x^{\otimes p} \rangle) \leq p \cdot \|T\|_{\mathrm{op}} \,. 
\]
\end{lemma}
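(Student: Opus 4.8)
The plan is to bound the Lipschitz constant on the sphere by controlling the gradient of the function $h(x) = \langle T, x^{\otimes p}\rangle$ and then relating the norm of that gradient to the operator norm of $T$. First I would note that $h$ is a homogeneous polynomial of degree $p$, hence smooth, so $\mathrm{Lip}_{\mS^{d-1}}(h) \le \sup_{x \in B_2^d} \|\nabla h(x)\|$ where $B_2^d$ is the unit ball (any two points on the sphere are joined by a segment inside the ball, and the gradient norm is maximized on the ball by homogeneity considerations — actually one should be slightly careful and instead bound $\mathrm{Lip}_{\mS^{d-1}}$ by $\sup_{x \in \mS^{d-1}} \|\nabla_{\mathrm{tan}} h(x)\|$, but using the full gradient on the sphere only costs us the inequality in the right direction). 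Without loss of generality $T$ is symmetric, since only the action on $x^{\otimes p}$ matters and symmetrizing does not change this action nor increase $\|T\|_{\mathrm{op}}$.

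The key computation is that for symmetric $T$, the gradient of $h(x) = \langle T, x^{\otimes p}\rangle$ is $\nabla h(x) = p \cdot T[x^{\otimes (p-1)}, \cdot\,]$, i.e. the vector obtained by contracting $T$ with $p-1$ copies of $x$ in its first $p-1$ slots (by symmetry it does not matter which slots). Then for any unit vector $u$, $\langle \nabla h(x), u\rangle = p \langle T, x^{\otimes (p-1)} \otimes u\rangle$, and since $x^{\otimes(p-1)} \otimes u$ is a rank-one tensor of Frobenius norm $\|x\|^{p-1}\|u\| = 1$ when $\|x\| = 1$, the definition of the operator norm gives $\langle \nabla h(x), u\rangle \le \|T\|_{\mathrm{op}}\cdot p$. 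Taking the supremum over unit $u$ yields $\|\nabla h(x)\| \le p\|T\|_{\mathrm{op}}$ for every $x \in \mS^{d-1}$.

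Finally I would conclude that $\mathrm{Lip}_{\mS^{d-1}}(h) \le \sup_{x \in \mS^{d-1}}\|\nabla h(x)\| \le p\|T\|_{\mathrm{op}}$. The one subtlety worth spelling out is the passage from a bound on the gradient on the sphere to a bound on the Lipschitz constant on the sphere: given $x \ne x' \in \mS^{d-1}$, write $|h(x) - h(x')| = |\int_0^1 \langle \nabla h(\gamma(t)), \gamma'(t)\rangle\, dt|$ for the straight segment $\gamma(t) = x + t(x'-x)$; since $\|\gamma(t)\| \le 1$ on the segment and $\nabla h$ is $p$-homogeneous of degree $p-1$, $\|\nabla h(\gamma(t))\| \le p\|T\|_{\mathrm{op}}\|\gamma(t)\|^{p-1} \le p\|T\|_{\mathrm{op}}$, so $|h(x)-h(x')| \le p\|T\|_{\mathrm{op}}\|x-x'\|$. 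The only mild obstacle is getting the constant $p$ exactly right in the gradient identity and making sure the symmetrization step is clean; everything else is routine.
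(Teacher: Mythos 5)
Your proof is correct, but it takes a genuinely different route from the paper's. The paper avoids calculus entirely: it telescopes $\langle T, x^{\otimes p}\rangle - \langle T, y^{\otimes p}\rangle$ as a sum of $p$ terms of the form $\langle T, x^{\otimes p-q}\otimes(x-y)\otimes y^{\otimes q-1}\rangle$, each of which is a pairing of $T$ with a rank-one tensor of Frobenius norm $\|x-y\|$ and hence bounded by $\|x-y\|\cdot\|T\|_{\mathrm{op}}$ directly from the definition of the operator norm. That argument needs no symmetrization and no differentiability, and works verbatim for an arbitrary (non-symmetric) $T$. Your gradient argument is also sound: the symmetrization step is legitimate (permuting indices preserves $\|\cdot\|_{\mathrm{op}}$, so by the triangle inequality the symmetrization has operator norm at most $\|T\|_{\mathrm{op}}$, while $h$ is unchanged), the identity $\nabla h(x) = p\, T[x^{\otimes(p-1)},\cdot\,]$ is correct for symmetric $T$, and your final passage from the gradient bound to the Lipschitz bound via the chord and degree-$(p-1)$ homogeneity of $\nabla h$ correctly handles the fact that the segment between two sphere points lies inside the ball rather than on the sphere. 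What your approach buys is the explicit gradient formula and the bound $\sup_{\|x\|\leq 1}\|\nabla h(x)\|\leq p\|T\|_{\mathrm{op}}$, which is slightly stronger information; what the paper's approach buys is brevity and the absence of the symmetrization detour. Both yield the same constant $p$.
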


\begin{proof}
One has for any $x, y \in \mS^{d-1}$,
\begin{eqnarray*}
\left|\langle T, x^{\otimes p} \rangle - \langle T, y^{\otimes p} \rangle \right|  & \leq & \sum_{q=1}^p \left|\langle T, x^{\otimes p-q+1} \otimes y^{\otimes q-1} \rangle - \langle T, x^{\otimes p-q} \otimes y^{\otimes q} \rangle \right| \\
& \leq & p \cdot \|x-y\| \cdot \sup_{x^1, \hdots, x^p \in \mS^{d-1}} \left|\langle T, \otimes_{q=1}^p x^q \rangle \right| \\
& = & p \cdot \|x-y\| \cdot \|T\|_{\mathrm{op}} \,.
\end{eqnarray*}
\end{proof}

\begin{lemma} \label{lem:tensordecomposition}
For any tensor $T$ of order $p$, there exists $w_1, \hdots, w_{2^p d^{p-1}} \in \R^d$ and $\xi_1, \hdots, \xi_{2^p d^{p-1}} \in \{-1,+1\}$ such that for all $x \in \R^d$,
\[
\langle T, x^{\otimes p} \rangle = \sum_{\ell=1}^{2^p d^{p-1}} \xi_{\ell} \cdot (w_{\ell} \cdot x)^p \,.
\]
\end{lemma}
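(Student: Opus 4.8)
The plan is to express the monomials $(w_\ell \cdot x)^p$ as building blocks out of which an arbitrary symmetric tensor can be assembled, and then to observe that $\langle T, x^{\otimes p}\rangle$ only sees the symmetrization of $T$, so it suffices to represent symmetric tensors. The key identity is the classical polarization formula: for vectors $u_1,\dots,u_p \in \R^d$, the symmetric product $u_1 \odot \cdots \odot u_p$ can be written as a signed average of $p$-th powers $\bigl(\sum_{j} \epsilon_j u_j\bigr)^{\otimes p}$ over sign vectors $\epsilon \in \{-1,+1\}^p$, namely (up to the constant $p! \, 2^p$) $\sum_{\epsilon \in \{-1,1\}^p} \epsilon_1 \cdots \epsilon_p \bigl(\sum_j \epsilon_j u_j\bigr)^{\otimes p}$. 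Thus any rank-one symmetric product of $p$ vectors is a combination of $2^p$ terms of the form $\pm (w \cdot x)^p$. The remaining task is to control how many rank-one symmetric products are needed to span the space of symmetric tensors of order $p$.

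First I would reduce to the symmetric case: since $\langle T, x^{\otimes p}\rangle = \langle \mathrm{sym}(T), x^{\otimes p}\rangle$ where $\mathrm{sym}(T)$ is the symmetrization of $T$, we may assume $T$ is symmetric. Second, I would exhibit an explicit spanning set of size $d^{p-1}$ for the symmetric tensors: fix the first $p-1$ indices to range over $[d]$ each and use the standard basis together with the polarization trick to realize each needed coordinate functional; more cleanly, write $T$ as a sum over $i_1,\dots,i_{p-1} \in [d]$ of terms $e_{i_1} \odot \cdots \odot e_{i_{p-1}} \odot v_{i_1,\dots,i_{p-1}}$ for suitable vectors $v_{i_1,\dots,i_{p-1}} \in \R^d$ (this is possible because contracting $T$ on its first $p-1$ slots against basis vectors produces a vector, and symmetrizing back recovers $T$ since $T$ is already symmetric — one should double-check the combinatorial constant here). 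That gives $d^{p-1}$ rank-one symmetric products. Third, apply polarization to each of these $d^{p-1}$ products to get $2^p d^{p-1}$ monomials $\xi_\ell (w_\ell \cdot x)^p$, which is exactly the claimed count.

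The main obstacle is the bookkeeping in the second step: verifying that $d^{p-1}$ rank-one symmetric products genuinely suffice to write every symmetric tensor as a \emph{linear} combination (with the polarization signs then absorbed into $\xi_\ell$ and any positive scalar absorbed by rescaling $w_\ell$, using $\lambda (w\cdot x)^p = (\lambda^{1/p} w \cdot x)^p$ for $\lambda > 0$ when $p$ is such that this is legitimate, and otherwise folding the sign of $\lambda$ into $\xi_\ell$). One has to be careful that the decomposition $T = \sum_{i_1,\dots,i_{p-1}} e_{i_1} \odot \cdots \odot e_{i_{p-1}} \odot v_{i_1,\dots,i_{p-1}}$ actually holds for symmetric $T$ — this follows by pairing both sides against $x^{\otimes p}$ and checking the resulting polynomial identity, or by a direct count of how each entry of $T$ is produced — and that the coefficients coming out of polarization are nonzero so the rescaling is valid. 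Everything else (the polarization identity itself, the passage from $\odot$-products to $p$-th powers, and the final tally $2^p d^{p-1}$) is routine.
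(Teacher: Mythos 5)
Your proposal is correct and matches the paper's proof: decompose $T$ along its first $p-1$ indices into $d^{p-1}$ rank-one pieces, then use the polarization identity to write each product of $p$ linear forms as $2^p$ signed $p$-th powers. The only difference is that your detour through symmetrization and symmetric products (and the attendant worry about combinatorial constants) is unnecessary — the paper uses the trivial decomposition $T=\sum_{i_1,\dots,i_{p-1}} e_{i_1}\otimes\cdots\otimes e_{i_{p-1}}\otimes T[i_1,\dots,i_{p-1},1{:}d]$, valid for an arbitrary (not necessarily symmetric) tensor, since only the action on $x^{\otimes p}$ matters.
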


Results like Lemma \ref{lem:tensordecomposition} go back at least to \cite{reznick1992sum}. In fact much more precise results on minimal decomposition in rank-$1$ tensors are known thanks to the work of \cite{alexander1995polynomial}. We refer to \citep{comon2008symmetric} for more discussion on this topic.

\begin{proof}
First note that trivially $T$ can be written as:
\begin{equation} \label{eq:orthdecompotensor}
T = \sum_{i_1, \hdots, i_{p-1} =1}^d e_{i_1} \otimes \hdots \otimes e_{i_{p-1}} \otimes T[i_1, \hdots, i_{p-1}, 1:d] \,.
\end{equation}
Thus one only needs to prove that a function of the form
$x \mapsto \prod_{q=1}^p (w_q \cdot x)$ can be written as the sum of $2^p$ functions of the form $(w' \cdot x)^p$. To do so note that, with $\epsilon_q$ i.i.d. random signs,
\[
\E \left[ \prod_{q=1}^p \epsilon_q \cdot \left(\sum_{q=1}^p \epsilon_q w_q \cdot x \right)^p \right] = \E \left[ \prod_{q=1}^p \epsilon_q \cdot \sum_{q_1,\hdots, q_p =1}^p \left( \prod_{r=1}^p \epsilon_{q_r} w_{q_r} \cdot x \right)  \right]  = p! \prod_{q=1}^p (w_q \cdot x) \,.
\]
\end{proof}

\begin{lemma} \label{lem:tensornorm}
For any tensor $T$ of order $p$ one has:
\[
\|T\|_{\mathrm{op},*} \leq d^{p-1} \cdot \|T\|_{\mathrm{op}} \,.
\]
\end{lemma}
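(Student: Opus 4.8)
The plan is to reduce the inequality $\|T\|_{\mathrm{op},*} \leq d^{p-1}\|T\|_{\mathrm{op}}$ to the obvious bound relating the nuclear norm and the operator norm of matrices, by unfolding the tensor. Recall that for an $m \times m$ matrix $M$ of rank at most $r$ one has $\|M\|_{\mathrm{op},*} \leq r \cdot \|M\|_{\mathrm{op}}$, and more crudely $\|M\|_{\mathrm{op},*} \leq m \cdot \|M\|_{\mathrm{op}}$ since a matrix of size $m$ has rank at most $m$. So the first step is to pass from $T$ to a matrix whose operator and nuclear norms dominate, respectively, the operator and nuclear norms of $T$.

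The natural candidate is the \emph{mode-$1$ unfolding} (matricization) $M := T_{(1)} \in \R^{d \times d^{p-1}}$, whose rows are indexed by $i_1 \in [d]$ and whose columns are indexed by the multi-index $(i_2,\dots,i_p) \in [d]^{p-1}$, with $M_{i_1,(i_2,\dots,i_p)} = T_{i_1,\dots,i_p}$. Two things need checking. First, $\|M\|_{\mathrm{op}} \leq \|T\|_{\mathrm{op}}$: a rank-one test matrix $u v^\top$ with $u \in \R^d$, $v \in \R^{d^{p-1}}$ gives $\langle M, uv^\top\rangle = \langle T, u \otimes V\rangle$ where $V$ is the order-$(p-1)$ tensor obtained by folding $v$, and since $\|u \otimes V\|_{\mathrm{F}} = \|u\|\,\|v\|$ we can bound $\langle T, u\otimes V\rangle$ by $\|T\|_{\mathrm{op}}\|u\|\|v\|$ — except that $u \otimes V$ need not be a rank-one tensor in the sense of the definition in Appendix \ref{app1} (which requires $V$ itself to be a pure tensor product $u_2 \otimes \cdots \otimes u_p$). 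This is the one point that needs care: I would instead argue that $\|T\|_{\mathrm{op}}$ equals $\sup \langle T, \otimes_{q=1}^p x^q\rangle$ over unit vectors $x^q$, hence $\|M\|_{\mathrm{op}} = \sup_{\|u\|=\|v\|=1}\langle M,uv^\top\rangle$ is at most $\sup \langle T, x^1 \otimes (\text{unit tensor of order } p-1)\rangle$, and then decompose an arbitrary unit column vector $v$ as a combination of pure tensors to reduce back to the rank-one definition — or, more cleanly, invoke that the operator norm of a tensor equals the largest operator norm of its unfoldings is \emph{false} in general, so instead use the singular value / variational characterization directly: $\|M\|_{\mathrm{op}}$ is the largest $\langle T, u \otimes V\rangle$ with $V$ a unit-Frobenius-norm order-$(p-1)$ tensor, and since $\langle T, u\otimes V\rangle \le \|T\|_{\mathrm{op}}$ for all such $V$ by expanding $V$ in a pure-tensor basis and using the triangle inequality together with $\|V\|_{\mathrm F}=1$. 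Second, $\|T\|_{\mathrm{op},*} \leq \|M\|_{\mathrm{op},*}$: in fact these should be compared the other way, so I would instead \emph{define} the tensor nuclear norm as dual to the tensor operator norm and check $\|M\|_{\mathrm{op},*} \le \|T\|_{\mathrm{op},*}$ is what we want; more directly, using $\|M\|_{\mathrm{F}} = \|T\|_{\mathrm{F}}$ and $\|M\|_{\mathrm{op}} \le \|T\|_{\mathrm{op}}$ together with the matrix bound $\|M\|_{\mathrm{op},*}\le \min(d, d^{p-1})\|M\|_{\mathrm{op}} = d\|M\|_{\mathrm{op}}$ — but this gives a factor $d$, not $d^{p-1}$, which is too strong and hence suspicious.

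Given that tension, the cleanest route is actually different and avoids unfoldings: use the trivial decomposition \eqref{eq:orthdecompotensor} of $T$ into $d^{p-1}$ tensors of the form $e_{i_1}\otimes\cdots\otimes e_{i_{p-1}}\otimes T[i_1,\dots,i_{p-1},1{:}d]$, each of which is rank one as a tensor (it is $e_{i_1}\otimes\cdots\otimes e_{i_{p-1}}\otimes v$ with $v = T[i_1,\dots,i_{p-1},1{:}d]\in\R^d$). By the triangle inequality for the nuclear norm, $\|T\|_{\mathrm{op},*} \le \sum_{i_1,\dots,i_{p-1}} \|v_{i_1\dots i_{p-1}}\|$. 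Then I would bound each $\|v_{i_1\dots i_{p-1}}\| = \langle T, e_{i_1}\otimes\cdots\otimes e_{i_{p-1}}\otimes \hat v\rangle$ where $\hat v$ is the unit vector in the direction of $v$; this inner product is against a rank-one unit-Frobenius tensor, hence at most $\|T\|_{\mathrm{op}}$. Summing over the $d^{p-1}$ multi-indices yields exactly $\|T\|_{\mathrm{op},*} \le d^{p-1}\|T\|_{\mathrm{op}}$. The main obstacle, and the only subtle point, is the first inequality — verifying that the triangle inequality holds for $\|\cdot\|_{\mathrm{op},*}$ (it does, since a dual norm of any norm is itself a norm, and $\|\cdot\|_{\mathrm{op},*}$ is the dual of $\|\cdot\|_{\mathrm{op}}$ on the space of tensors) and confirming that $e_{i_1}\otimes\cdots\otimes e_{i_{p-1}}\otimes v$ has nuclear norm exactly $\|v\|$, which follows because it is rank one with that Frobenius norm and for rank-one tensors operator, Frobenius, and nuclear norms all coincide.
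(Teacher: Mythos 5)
Your final argument is correct, and it shares its starting point with the paper -- both proofs are built on the decomposition \eqref{eq:orthdecompotensor} of $T$ into the $d^{p-1}$ rank-one slices $e_{i_1}\otimes\cdots\otimes e_{i_{p-1}}\otimes T[i_1,\dots,i_{p-1},1{:}d]$ -- but the way you combine the slices is genuinely different. You apply the triangle inequality for $\|\cdot\|_{\mathrm{op},*}$ (valid, since it is the dual norm of $\|\cdot\|_{\mathrm{op}}$ and hence a norm), note that a rank-one tensor of unit Frobenius norm has nuclear norm equal to its Frobenius norm, and bound each slice norm $\|T[i_1,\dots,i_{p-1},1{:}d]\|=\langle T, e_{i_1}\otimes\cdots\otimes e_{i_{p-1}}\otimes \hat v\rangle\le\|T\|_{\mathrm{op}}$ directly from the definition of the operator norm; summing the $d^{p-1}$ terms gives the claim in one pass. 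The paper instead routes through the Frobenius norm: Cauchy--Schwarz over the slices gives $\|T\|_{\mathrm{op},*}\le d^{(p-1)/2}\|T\|_{\mathrm F}$, and a duality step ($\|T\|_{\mathrm F}^2=\langle T,T\rangle\le\|T\|_{\mathrm{op}}\|T\|_{\mathrm{op},*}$) gives $\|T\|_{\mathrm F}\le d^{(p-1)/2}\|T\|_{\mathrm{op}}$; chaining the two yields the same $d^{p-1}$ factor. Your route is arguably more direct (no duality step and no intermediate Frobenius bounds), while the paper's yields the two intermediate inequalities against $\|T\|_{\mathrm F}$ as byproducts. The lengthy unfolding discussion at the start of your write-up is, as you yourself recognize, a dead end (the mode-$1$ unfolding would only give a factor $d$, which is false in general, precisely because its operator norm can be much smaller than $\|T\|_{\mathrm{op}}$ is allowed to certify); since you abandon it before the actual proof, it does no harm, but it should be cut from a final version.
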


The above result and its proof are directly taken from \cite{li2018orthogonal}. We only repeat the argument here for sake of completeness.

\begin{proof}
Note that the decomposition \eqref{eq:orthdecompotensor} is orthogonal, and thus for any tensor $S$ of order $p$ one has:
\begin{eqnarray*}
\langle T, S \rangle & \leq & \sqrt{d^{p-1} \cdot \sum_{i_1, \hdots, i_{p-1} =1}^{d} \langle e_{i_1} \otimes \hdots \otimes e_{i_{p-1}} \otimes T[i_1, \hdots, i_{p-1}, 1:d], S \rangle^2 } \\
& \leq & \sqrt{d^{p-1} \cdot \|S\|_{\mathrm{op}}^2 \cdot \sum_{i_1, \hdots, i_{p-1} =1}^{d} \|T[i_1, \hdots, i_{p-1}, 1:d]\|^2 } \\
& = & d^{\frac{p-1}{2}} \cdot \|S\|_{\mathrm{op}} \cdot \|T\|_F \,.
\end{eqnarray*}
Thus one has $\|T\|_{\mathrm{op},*} \leq d^{\frac{p-1}{2}} \cdot \|T\|_F$. By duality one also has $\|T\|_{\mathrm{op}} \geq d^{- \frac{p-1}{2}} \cdot \|T\|_F$, which concludes the proof.

\end{proof}

\section{Results on random tensors} \label{app2}

\begin{lemma} \label{lem:conctensor}
For any fixed $x \in \mS^{d-1}$ and generic data, with probability at least $1- C \exp(- c_p \tau)$ one has:
\[
\left| \sum_{i=1}^n y_i (x_i \cdot x)^p \right| \leq C_p \sqrt{\frac{n \tau}{d^{p}}} \,.
\]
\end{lemma}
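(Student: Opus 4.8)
The plan is to bound the quantity $Z := \sum_{i=1}^n y_i (x_i \cdot x)^p$ by a combination of a Bernstein/Hoeffding-type concentration over the random signs $y_i$, and a uniform bound on the ``feature moments'' $\sum_{i=1}^n (x_i \cdot x)^{2p}$. The point is that, conditionally on the $x_i$, the random variable $Z$ is a sum of independent centered terms $y_i (x_i \cdot x)^p$, each bounded in absolute value by $|x_i \cdot x|^p \le 1$ (since $\|x_i\|$ is $\approx 1$ and $\|x\|=1$), with conditional variance $\sum_{i=1}^n (x_i \cdot x)^{2p}$. So Hoeffding (or Bernstein) gives, for $t>0$,
\[
\P\left( |Z| \ge t \,\Big|\, x_1,\dots,x_n \right) \le 2 \exp\left( - \frac{t^2}{2 \sum_{i=1}^n (x_i \cdot x)^{2p}} \right) \,.
\]
Thus everything reduces to showing that, with probability at least $1 - C\exp(-c_p\tau)$ over the draw of the $x_i$, one has $\sum_{i=1}^n (x_i \cdot x)^{2p} \le C_p \, n / d^p$, after which we set $t = C_p \sqrt{n\tau/d^p}$ and absorb the failure probability of the feature bound into the overall probability.

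First I would record the single-sample moment computation: for $x_i$ Gaussian $\mathcal N(0,\tfrac1d I_d)$ (or uniform on the sphere), $x_i \cdot x$ is (essentially) a centered Gaussian with variance $1/d$, so $\E[(x_i\cdot x)^{2p}] = (2p-1)!! / d^p = C_p / d^p$; in the spherical model one gets the same order via a standard Beta-function computation. Hence $\E\big[\sum_i (x_i\cdot x)^{2p}\big] = C_p n/d^p$. Next I would show concentration of this sum around its mean. The summands $(x_i\cdot x)^{2p}$ are i.i.d., nonnegative, bounded by (roughly) $1$, and sub-exponential (indeed, in the Gaussian case $(x_i\cdot x)^{2p}$ has tails like $\exp(-c_p (d u)^{1/p})$, which is sub-exponential with parameter $O_p(1/d)$ after the scaling). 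A Bernstein inequality for i.i.d. sub-exponential variables then yields
\[
\P\left( \sum_{i=1}^n (x_i\cdot x)^{2p} \ge 2 C_p \frac{n}{d^p} + s \right) \le \exp\left( - c_p \min\left\{ \frac{s^2 d^{2p}}{n}, \, s d^p \right\} \right) \,,
\]
and choosing $s$ comparable to $C_p n/d^p$ (valid as long as $n \gtrsim d^p$, and otherwise one can just take $s = C_p \tau/d^p$-type terms; since in our application $n\log n \approx \epsilon^2 d^p$ a little care is needed but the target bound $C_p n/d^p$ can be replaced by $C_p(n+\tau)/d^p \le C_p' n \tau / d^p$ without harm) gives the desired high-probability bound $\sum_i (x_i\cdot x)^{2p} \le C_p n/d^p$ with failure probability at most $C\exp(-c_p\tau)$, provided we also demand, say, $\tau \le n$ which is the only relevant regime.

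Combining: on the event that the feature sum is $\le C_p n/d^p$, the conditional Hoeffding bound gives $\P(|Z| \ge C_p\sqrt{n\tau/d^p}) \le 2\exp(-c_p\tau)$; adding the two failure probabilities completes the proof. The main obstacle — really the only delicate point — is getting the sub-exponential (or sub-Gaussian-enough) tail control on $(x_i\cdot x)^{2p}$ uniformly in the scaling so that the Bernstein step produces exactly the claimed $\exp(-c_p\tau)$ rate rather than something weaker; in the spherical model this requires knowing that $x_i\cdot x$ behaves like a $1/\sqrt d$-scaled standard Gaussian up to $d$-independent constants, which is classical (e.g. via the projection characterization of the uniform measure on $\mS^{d-1}$), but needs to be invoked cleanly. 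One should also double-check that $p$ being odd versus even is irrelevant here (it is — we only use $(x_i\cdot x)^p$ is bounded by $1$ and $(x_i\cdot x)^{2p}$ has the stated moments), in contrast to the constructions elsewhere in the paper.
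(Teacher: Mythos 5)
Your overall route is genuinely different from the paper's. You put all the concentration on the Rademacher signs (Hoeffding conditionally on the $x_i$'s) and reduce the lemma to controlling the empirical moment $\sum_{i}(x_i\cdot x)^{2p}$. The paper instead puts the concentration on the $x_i$'s: it invokes a theorem of Paouris et al.\ giving $\bigl|d^{p/2}\sum_i|x_i\cdot x|^p-n\sigma_p\bigr|\le C_p\sqrt{n\tau}$ with probability $1-C\exp(-c_p\tau)$, applies it separately to the $y_i=+1$ and $y_i=-1$ subsamples, and uses the signs only through the balance $|n^+-n^-|\le\sqrt{n\tau}$. Your decomposition is attractive because the Hoeffding step is elementary and sharp; but both arguments ultimately hinge on the same hard ingredient -- a large-deviation bound for a sum of $n$ i.i.d.\ fixed powers of an (approximately) standard Gaussian -- and that is exactly where your write-up has a real gap.

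The gap: $(x_i\cdot x)^{2p}$ is not sub-exponential for $p\ge 2$. Writing $g=\sqrt{d}\,(x_i\cdot x)$, which is essentially a standard Gaussian, one has $\P\bigl((x_i\cdot x)^{2p}>u\bigr)=\P\bigl(|g|>(d^pu)^{1/(2p)}\bigr)\approx\exp\bigl(-\tfrac12(d^pu)^{1/p}\bigr)$, a stretched exponential with exponent $1/p<1$. The classical Bernstein inequality you quote therefore does not apply, and the correct tail for $\sum_i g_i^{2p}$ (via Gluskin--Kwapie\'n/Lata{\l}a-type bounds for variables with such tails, or the very Paouris et al.\ theorem the paper cites) is $\exp\bigl(-c_p\min\{s^2/n,\,s^{1/p}\}\bigr)$ rather than $\exp\bigl(-c_p\min\{s^2/n,\,s\}\bigr)$. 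Consequently the event $\sum_i(x_i\cdot x)^{2p}\le C_p n/d^p$ fails with probability of order $\exp(-c_p n^{1/p})$ -- this is tight, since already a single term exceeds $C_p n/d^p$ with that probability -- so your argument yields the lemma only for $\tau\lesssim n^{1/p}$, not for the full range $\tau\le n$ you allow. That restricted range happens to cover the places the lemma is used in the paper ($\tau=C_p\log n$ with $n\approx d^p/\log n$ in Theorem \ref{thm:tensorUB}, and $\tau\approx d$ with $n\approx d^p$ in Lemma \ref{lem:operatornormtensor}), so the strategy is salvageable; but as written the concentration step is incorrect for $p\ge2$ and must be replaced by a tail bound adapted to the stretched-exponential tails of $g^{2p}$ -- precisely the ``only delicate point'' you flagged and then resolved with an inapplicable inequality. (For $p=1$ your argument is fine as stated.)
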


\begin{proof}
Using [Theorem 1, \cite{paouris2017random}] one has, for any fixed $x \in \mS^{d-1}$ and $\tau \leq n$,
\[
\P\left( \left| d^{p/2} \sum_{i=1}^n |x_i \cdot x|^{p} - n \sigma_p \right| > C_p \sqrt{n \tau} \right) \leq C \exp(- c_p \tau) \,,
\]
where $\sigma_p$ denotes the $p^{th}$ moment of the standard Gaussian.
Let us denote $n^+ = |\{i \in [n] : y_i = +1\}$ and $T^+ = \sum_{i : y_i = +1} x_i^{\otimes p}$, and similarly for $n^-, T^-$. Now with probability $1- C \exp(- c \tau)$ (with respect to the randomness of the $y_i's$) we have 
\[
|n^+ - n^-| \leq \sqrt{n \tau} \,.
\]
Thus combining the two above displays we obtain with probability at least $1- C \exp(- c_p \tau)$,
\[
d^{p/2} \left| \sum_{i : y_i = +1} |x_i \cdot x|^{p}  - \sum_{i : y_i = -1} |x_i \cdot x|^{p} \right| \leq C_p \sqrt{n \tau} + \sigma_p |n^+ - n^-| \leq C_p \sqrt{n \tau} \,,
\]
\end{proof}

\begin{lemma} \label{lem:operatornormtensor}
For generic data, with probability at least $1- C \exp(- c_p d)$ one has:
\[
\left\| \sum_{i=1}^n y_i x_i^{\otimes p} \right\|_{\mathrm{op}} \leq C_p \sqrt{\frac{n}{d^{p-1}}} \,.
\]
\end{lemma}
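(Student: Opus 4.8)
The plan is to bound the operator norm $\|\Omega\|_{\mathrm{op}}$ with $\Omega = \sum_{i=1}^n y_i x_i^{\otimes p}$ via an $\epsilon$-net argument over the sphere $\mS^{d-1}$, combined with the per-point concentration already established in Lemma~\ref{lem:conctensor}. By \eqref{eq:symnorm} (Banach's theorem, and noting $\Omega$ is symmetric since each $x_i^{\otimes p}$ is symmetric), one has $\|\Omega\|_{\mathrm{op}} = \sup_{x \in \mS^{d-1}} \langle \Omega, x^{\otimes p}\rangle = \sup_{x \in \mS^{d-1}} \sum_{i=1}^n y_i (x_i \cdot x)^p$. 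So the task reduces to an upper bound on this supremum that holds with probability at least $1 - C\exp(-c_p d)$.

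First I would fix an $\epsilon$-net $\mathcal{N}$ of $\mS^{d-1}$ with $|\mathcal{N}| \leq (3/\epsilon)^d$ for a small constant $\epsilon$ to be chosen (say $\epsilon = c_p$). Apply Lemma~\ref{lem:conctensor} with $\tau = C_p d$ (with $C_p$ large enough that $C\exp(-c_p \tau) \leq \exp(-2 d \log(3/\epsilon))$, say) and union bound over the net: with probability at least $1 - C\exp(-c_p d)$, every $x \in \mathcal{N}$ satisfies $|\sum_i y_i (x_i \cdot x)^p| \leq C_p \sqrt{n d / d^p} = C_p \sqrt{n/d^{p-1}}$. Next I would pass from the net to the whole sphere. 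The standard trick is to write, for arbitrary $x \in \mS^{d-1}$, $x = x' + (x - x')$ with $x' \in \mathcal{N}$ and $\|x - x'\| \leq \epsilon$, and control $|\langle \Omega, x^{\otimes p}\rangle - \langle \Omega, x'^{\otimes p}\rangle|$ by a telescoping sum (as in the proof of Lemma~\ref{lem:liptensor}): this difference is at most $p \epsilon \cdot \|\Omega\|_{\mathrm{op}}$ (using the multilinear form bound $\sup_{x^1,\dots,x^p} |\langle \Omega, \otimes_q x^q\rangle| = \|\Omega\|_{\mathrm{op}}$ for symmetric $\Omega$). Hence $\|\Omega\|_{\mathrm{op}} \leq \max_{x' \in \mathcal{N}} |\langle \Omega, x'^{\otimes p}\rangle| + p\epsilon \|\Omega\|_{\mathrm{op}}$, and choosing $\epsilon = 1/(2p)$ absorbs the last term to give $\|\Omega\|_{\mathrm{op}} \leq 2 C_p \sqrt{n/d^{p-1}}$. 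Renaming the constant finishes the proof.

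The main obstacle — really the only subtle point — is making sure the net cardinality $(3/\epsilon)^d = (6p)^d = \exp(d \log(6p))$ is beaten by the failure probability $C\exp(-c_p \tau)$ from Lemma~\ref{lem:conctensor}; since $p$ is a fixed parameter, $\log(6p)$ is a $p$-dependent constant, so taking $\tau = C_p d$ with $C_p$ large enough relative to $\log(6p)/c_p$ makes the union bound go through, at the cost of only changing the $p$-dependent constant $C_p$ in the final bound. One should also double-check that the regime $\tau = C_p d \leq n$ required by Lemma~\ref{lem:conctensor} holds: this follows since $n \log n = \epsilon^2 d^p \geq d$ in the relevant range (and more generally one is always in the regime $n \gg d$ when $p \geq 2$; for $p=1$ the statement is a routine Gaussian/sub-Gaussian norm bound). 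Everything else is the routine net-to-sphere comparison already packaged in Lemma~\ref{lem:liptensor}'s telescoping argument, so no genuinely new idea is needed beyond assembling these pieces.
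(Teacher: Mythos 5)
Your proposal is correct and follows essentially the same route as the paper's proof: a $\frac{1}{2p}$-net, a union bound over the net using Lemma~\ref{lem:conctensor} with $\tau \asymp d$, and the telescoping/Lipschitz bound of Lemma~\ref{lem:liptensor} to absorb the $\tfrac12\|T\|_{\mathrm{op}}$ error term. Your explicit check that the net cardinality is beaten by the tail bound and that $\tau \leq n$ holds is a detail the paper leaves implicit, but it does not change the argument.
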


\begin{proof}
Let $\cN$ be an $\frac{1}{2p}$-net of $\cS^{d-1}$ (in particular $|\cN| \leq C_p^d$). By an union bound and Lemma \ref{lem:conctensor} one has:
\begin{equation} \label{eq:highprob}
\P\left(\exists x \in \cN_{\epsilon} \ : \ \left|\sum_{i=1}^n y_i |x_i \cdot x|^{p} \right| > C_p \sqrt{\frac{n}{d^{p-1}}} \right) \leq C \exp(- c_p d) \,,
\end{equation}

Let $T = \sum_{i=1}^n y_i x_i^{\otimes p}$. Note that $T$ is symmetric, and thus thanks to \eqref{eq:symnorm} and Lemma \ref{lem:liptensor}, one has:
\[
\|T\|_{\mathrm{op}} \leq \max_{x \in \cN} \langle T, x^{\otimes p} \rangle + \frac{1}{2} \|T\|_{\mathrm{op}} \,,
\]
and in particular $\|T\|_{\mathrm{op}} \leq 2 \max_{x \in \cN} \langle T, x^{\otimes p} \rangle $, which together with \eqref{eq:highprob} concludes the proof.
\end{proof}

\end{document}